\newcommand{\RR}{\mathbb{R}}
\newcommand{\NN}{\mathbb{N}}
\newcommand{\sym}{\mathbb{S}}
\renewcommand\paragraph{\@startsection{paragraph}{4}{\z@}%
  {-3.25ex\@plus -1ex \@minus -.2ex}%
  {1.5ex \@plus .2ex}%
  {\normalfont\normalsize\bfseries}}
\newtheorem{theorem}{Theorem}[section]
\newtheorem{lemma}[theorem]{Lemma}
\newtheorem{proposition}[theorem]{Proposition}
\newtheorem{corollary}[theorem]{Corollary}
\newtheorem{conjecture}[theorem]{Conjecture}
\newtheorem{assumption}[theorem]{Assumption}
\newenvironment{proof}[1][Proof]{\begin{trivlist}
\item[\hskip \labelsep {\bfseries #1}]}{\end{trivlist}}
\newenvironment{remark}[1][Remark]{\begin{trivlist}
\item[\hskip \labelsep {\bfseries #1}]}{\end{trivlist}}
\newcommand{\qed}{\nobreak \ifvmode \relax \else
      \ifdim\lastskip<1.5em \hskip-\lastskip
      \hskip1.5em plus0em minus0.5em \fi \nobreak
      \vrule height0.75em width0.5em depth0.25em\fi}
\newcommand{\norm}[1]{\lVert#1\rVert}
\title{A Comparison of Relaxations of Multiset Cannonical Correlation Analysis and Applications}
\author{Jan~Rupnik,
Primoz~Skraba\footnote{J. Rupnik and P. Skraba are with the Artificial Intelegence Laboratory, Jožef Stefan Institute, Jamova 39, Ljubljana, Slovenia\protect\\
E-mail: jan.rupnik@ijs.si, primoz.skraba@ijs.si} ,
John~Shawe-Taylor\footnote{J. Shawe-Taylor is with the Department of Computer Science, University College London, Gower Street, London WC1E 6BT, United Kingdom\protect\\
E-mail: j.shawe-taylor@ucl.ac.uk},
Sabrina~Guettes\footnote{S. Guettes is with GEN-I, d.o.o.\protect\\
E-mail: sabrina.guettes@gmail.com}
}
\begin{document}
\maketitle

\begin{abstract}
Canonical correlation analysis is a statistical technique that is
used to find relations between two sets of variables. An
important extension in pattern analysis is to consider more than
two sets of variables. This problem can be expressed as a
quadratically constrained quadratic program (QCQP), commonly
referred to Multi-set Canonical Correlation Analysis (MCCA). This
is a non-convex problem and so greedy algorithms converge to
local optima without any guarantees on global optimality. In this
paper, we show that despite being highly structured, finding the
optimal solution is NP-Hard. This motivates our relaxation of the
QCQP to a semidefinite program (SDP). The SDP is convex, can be
solved reasonably efficiently and comes with both absolute and
output-sensitive approximation quality. In addition to theoretical
guarantees, we do an extensive comparison of the QCQP method and
the SDP relaxation on a variety of synthetic and real world
data. Finally, we present two useful extensions: we incorporate
kernel methods and computing multiple sets of canonical vectors.

\end{abstract}

\section{Introduction}
Natural phenomena are often the product of several factors
interacting. A fundamental challenge of pattern analysis and
machine learning is to find the relationships between these
factors. Real world datasets are often modeled using
distributions such as mixtures of Gaussians.  These models often
capture the uncertainty inherent in both underlying systems and
measurements. Canonical correlation analysis (CCA) is a
well-known statistical technique developed to find the
relationships between two sets of random variables.
The relations or patterns discovered by CCA can be used
in two ways. First, they can be used to obtain a common representation for both sets of variables.
Second, the patterns themselves can be used in an exploratory analysis (see \cite{Hardoon_usingimage} for example).

It is possible to extend this idea beyond two sets. The
problem is then known as the Multi-set Canonical Correlation Analysis
(MCCA). Whereas it can be shown that CCA can be solved using an
(generalized) eigenvalue computation, MCCA is a much more
difficult problem. One approach is to express it as a
non-convex quadratically constrained quadratic program (QCQP). In
this paper, we show that despite being a highly structured
problem, it is NP-hard. We then describe an efficient algorithm
for finding a locally optimal solutions to the problem.

Since the algorithm is local and the problem non-convex, we
cannot guarantee the quality of the solutions
obtained. Therefore, we give a relaxation of the problem based on
semi-definite programming (SDP) which gives a constant factor
approximation as well as an output sensitive guarantee.

For use in practical applications, we describe two important
extensions: we adapt the methods to use kernels and to find
multi-dimensional solutions.

Finally, we perform extensive experimentation to compare the
efficient local algorithm and the SDP relaxation on both
synthetic and real-world datasets. Here, we show experimentally
that the hardness of the problem is in some sense generic in low
dimensions. That is, a randomly generated problem in low
dimensions will result in many local maxima which are far from
the global optimum. Somewhat surprisingly, this does not occur in
higher dimensions.

Our contributions in this paper are as follows:
\begin{itemize}
\item We show that in general MCCA is NP-hard.
\item We describe an scalable and efficient algorithm for finding a locally optimal solution.
\item Using an SDP relaxation of the problem, we can compute a
  global upper bound on the objective function along with various
  approximation guarantees on solutions based on this relaxations.
\item We describe two extensions which are important for practical applications: a kernel method and computing multiple sets of canonical vectors.
\item An extensive experimental evaluation of the respective algorithms: we show that in practice the local algorithm performs extremely well, something we can verify with using the SDP relaxation as well as show there are cases where the local algorithm is far from the optimal solution. We do this with a combination of synthetic and real world examples.
\item We propose a preprocessing step based on random projections, which enables us to apply the SDP bounds on large, high dimensional data sets.
\end{itemize}

The paper is organized as following.
Section \ref{sec:Background} describes the background and related work.
Section \ref{sec:sumcor} introduces the main optimization problem, discusses the problem complexity and presents several bounds on optimal solutions.
Section \ref{sec:sumcorextensions} describes the extensions of the original formulation to higher-dimensional, nonlinear case.
Section \ref{sec:experiments} presents empirical work based on synthetic and real data.
Conclusions and future work is discussed in section \ref{sec:discussion}. Finally, in the Appendix we included a primer for the  notation used in the paper.

\section{Background}\label{sec:Background}
Canonical Correlation Analysis (CCA), introduced by Harold Hotelling \cite{Hotelling},  was developed to detect linear relations between two sets of variables. Typical uses of CCA include
statistical tests of dependence between two random vectors, exploratory analysis on multi-view data, dimensionality reduction and finding a common embedding of two random vectors that share mutual information.

 CCA has been generalized in two directions: extending the method to finding nonlinear relations  by using kernel methods \cite{FBMJ}\cite{HardoonCCA} (see \cite{shawe-taylor04kernel} for an introduction to kernel methods) and extending the method to more than two sets of variables which was introduced in \cite{Kettenring}. Among several proposed generalizations in \cite{Kettenring} the most notable is the sum of correlations (SUMCOR) generalization and it is the focus of our paper. There the result is to project $m$ sets of random variables to $m$ univariate random variables, which are pair-wise highly correlated on average\footnote{For $m$ univariate random variables, there are $\binom{m}{2}$ pairs on which we measure correlation}. An iterative method to solve the SUMCOR generalization was proposed in \cite{Horst} and the proof of convergence was established in \cite{Chu}. In \cite{Chu} it was shown that there are exponentially many solutions to a generic SUMCOR problem. In \cite{GlobalMEP} and \cite{GlobalMEP2} some global solution properties were established for special families of SUMCOR problems (nonnegative irreducible quadratic form).
In our paper we show that easily computable necessary and sufficient global optimality conditions are theoretically impossible (which follows from NP-hardness of the problem). Since in
practice good local solutions can be obtained we will present some results on sufficient global optimality.

We also focus on extensions of the local iterative approach \cite{Horst} to make the method practical. Here we show how the method can be extended to finding non-linear patterns and finding more than one set of canonical variates. Our work is related to \cite{JointBSSAppl} where a deflation scheme is used together with the Newton method to find several sets of canonical variates. Our nonlinear generalization is related to \cite{nonlinJointBSS}, where the main difference lies in the fact that we kernelized the problem, whereas the authors in \cite{nonlinJointBSS} worked with explicit nonlinear feature representation.

We now list some applications of the SUMCOR formulation. In \cite{kernelHyperAppl} an optimization problem for multi-subject functional magnetic resonance imaging (fMRI) alignment is proposed, which can be formulated as a SUMCOR problem (performing whitening on each set of variables). Another application of the SUMCOR formulation can be found in \cite{JointBSSAppl}, where it is used for group blind source separation on fMRI data from multiple subjects. An optimization problem equivalent to SUMCOR also arises in control theory \cite{ControlApplication} in the form of linear sensitivity analysis of systems of differential equations.

\section{Sum of correlations optimization problem}\label{sec:sumcor}

Before stating the problem, we must introduce some notation and
context.  Assume that we have a random vector $\mathcal{X}$
distributed over $\RR^N$, which is centered: $E\left(\mathcal{X}\right) = 0$. Let
$C := Cov\left(\mathcal{X}, \mathcal{X}\right)$ denote the covariance matrix of $\mathcal{X}$.

Throughout the paper we will use the block matrix and vector notation.
The \emph{block structure} $$b := \left(n_1, \ldots, n_m\right), \sum_i b\left(i\right) = N$$ denotes the number
of elements in each of $m$ blocks. Sub-vectors according to the block structure $b$ are denoted as $\mathcal{X}^{(i)} \in \RR^{n_i}$
($i$-th block-row of vector $\mathcal{X}$) and sub-matrices as $C^{(i,j)} \in \RR^{n_i \times n_j}$ ($i$-th block-row, $j$-th block column of matrix $C$); see Figure~\ref{fig:block_structure}.

\begin{figure}[t]
\centering
\includegraphics[width=0.5\textwidth]{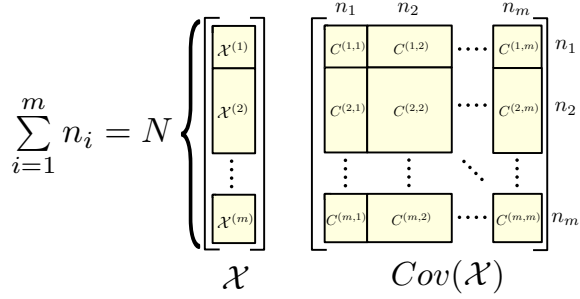}
\caption{\label{fig:block_structure} The block structure of the  random vector $\mathcal{X}$ and the corresponding covariance block structure.}
\end{figure}
For a vector, $w \in \RR^N$, define
\begin{equation*}
\mathcal{Z}_i := \sum_{j = 1}^{n_i} \mathcal{X}^{(i)}\left(j\right)
w^{(i)}\left(j\right) = \mathcal{X}^{(i)T} \cdot w^{(i)},
\end{equation*}
$\mathcal{Z}_i$ is a random variable computed as linear combination of components of
$\mathcal{X}^{(i)}$.

Let $\rho\left(x,y\right)$ denote the correlation
 coefficient between two random variables,
\begin{equation*}
\rho\left(x,y\right) =
 \frac{Cov\left(x,y\right)}{\sqrt{Cov\left(x,x\right) Cov\left(y,y\right)}}
\end{equation*}
 The correlation coefficient between $\mathcal{Z}_i$ and
 $\mathcal{Z}_j$ can be expressed as:
\begin{equation*}
\rho\left(\mathcal{Z}_i, \mathcal{Z}_j\right) = \frac{w^{(i)T} C^{(i,j)}
   w^{(j)}}{\sqrt{w^{(i)T} C^{(i,i)} w^{(i)}}\sqrt{w^{(j)T}
     C^{(j,j)} w^{(j)}} }
\end{equation*}
We are now ready to state the problem. In this paper, we deal
with the problem of finding an optimal set of vectors $w^{(i)}$
which maximize
\begin{equation}\label{eq:SUMCOR}
\tag{SUMCOR}
\sum_{i = 1}^m \sum_{j = i+1}^m
\rho\left(\mathcal{Z}_i, \mathcal{Z}_j\right)
\end{equation}
This is a generalization of Canonical Correlation Analysis where
$m=2$. We refer to this problem as Multi-set Canonical
Correlation Analysis (MCCA).  We refer to each
$\mathcal{X}^{(i)}$ as a particular view of some underlying
object with the assumption is that random vectors
$\mathcal{X}^{(i)}$ share some mutual information (i.e. are not independent).
The original sum of correlations optimization problem is:
\begin{equation*}
\begin{aligned}
& \underset{w \in \RR^N}{\text{max}} & & \sum_{i = 1}^m
  \sum_{j = i+1}^m \frac{w^{(i)T} C^{(i,j)}
    w^{(j)}}{\sqrt{w^{(i)T} C^{(i,i)} w^{(i)}} \sqrt{w^{(j)T}
      C^{(j,j)} w^{(j)}}}.
\end{aligned}
\end{equation*}
%
%
The solution to the optimization problem, the set of components, $\left(w^{(1)}, \ldots, w^{(m)}\right),$ are referred to as the set of canonical vectors.
Observe that the solution is invariant to scaling (only the direction matters): if $\left(w^{(1)}, \ldots, w^{(m)}\right)$ is a solution, then $\left(\alpha_1 \cdot w^{(1)}, \ldots, \alpha_m \cdot w^{(m)}\right)$ is also a solution for $\alpha_i > 0$. This means that we have the freedom to impose the constraints $w^{(i)T}C^{(i,i)}w^{(i)} = 1$, which only affect the norm of the solutions. We now
arrive to an equivalent constrained problem:
\begin{equation}\label{eq:qcqp0}
\begin{aligned}
& \underset{w \in \RR^N}{\text{maximize}}
& & \sum_{i = 1}^m \sum_{j = i+1}^m w^{(i)T} C^{(i,j)} w^{(j)}\\
& \text{subject to}
& &w^{(i)T} C^{(i,i)} w^{(i)} = 1, \quad\forall i = 1,\ldots, m.
\end{aligned}
\end{equation}
We proceed by multiplying the objective by $2$ and adding a constant $m$, which does not affect the optimal solution. Using the equalities: $w^{(i)T} C^{(i,j)} w^{(j)} = w^{(j)T} C^{(j,i)} w^{(i)}$ and $w^{(i)T} C^{(i,i)} w^{(i)} = 1$ we arrive at:
\begin{equation}\label{eq:qcqp05}
\begin{aligned}
& \underset{w \in \RR^N}{\text{maximize}}
& & \sum_{i = 1}^m \sum_{j = 1}^m w^{(i)T} C^{(i,j)} w^{(j)}\\
& \text{subject to}
& &w^{(i)T} C^{(i,i)} w^{(i)} = 1, \quad\forall i = 1,\ldots, m.
\end{aligned}
\end{equation}
This allows us to consider the summation as the quadratic form $w^T C w$.

Let $C^{(i,i)}$ is strictly positive definite, the Cholesky
decomposition $C^{(i,i)} = D_i^T D_i$ exists. Using the substitution
$\widetilde{w}_i := D_i w_i$  and defining $A \in \RR^N$ such that
\begin{equation*}
A^{(i,j)} := {D_i}^{-T} C^{(i,j)} {D_j}^{-1}
\end{equation*}
Here the block structure $b$ is used. As a consequence of the
substitution, we have that $A^{(i,i)} = I_{b\left(i\right)}$, where $I$
denotes the $b\left(i\right)$-by-$b\left(i\right)$ identity matrix. Using block
vector notation, let $x \in \RR^N, x^{(i)} = \widetilde{w}_i,
\forall i = 1,\ldots, m$.

The form of the optimization problem we will finally consider is:
\begin{equation}\label{eq:qcqp}
\tag{QCQP}
\begin{aligned}
& \underset{x \in \RR^{N}}{\text{max}}
& & x^T A x\\
& \text{subject to}
& &x^{(i)T} x^{(i)} = 1, \quad\forall i = 1, \ldots, m,.
\end{aligned}
\end{equation}
where  $b := \left(n_1,\ldots,n_m\right)$ encodes the block structure,  $A \in \sym_N^{+}$, $A^{(i,i)} = I_{b\left(i\right)}$.

We started with a formulation (\ref{eq:SUMCOR}) and arrived to (\ref{eq:qcqp}). The last optimization problem is
simpler to manipulate and will be used to prove the complexity result of (\ref{eq:SUMCOR}), as well as to obtain
a relaxed version of the problem along with some useful bounds. We will also state a local-optimization approach to solving (\ref{eq:SUMCOR})
based on the (\ref{eq:qcqp}) problem formulation.

\subsection{NP-Hardness}\label{subsec:nphard}
First, we give a reduction to show that this optimization problem
is NP-hard. We use a reduction from a general binary quadratic
optimization (BQO) problem.

Let $A   \in \RR^{m\times m}$  
the binary quadratic optimization (BQO) problem is
\begin{equation}\label{eq:binqp}
\tag{BQO}
\begin{aligned}
& \underset{x \in \RR^m}{\text{max}}
& & x^T A x  \\
& \text{subject to}
& & x\left(i\right)^2 = 1,  \quad\forall i =1,\ldots,m\\
\end{aligned}
\end{equation}

Many difficult combinatorial optimization problems (for example
maximum cut  problem and maximum clique problem~\cite{Garey:1990:CIG:574848}) can be reduced to
BQO \cite{Goemans95improvedapproximation}, which is known to be NP-hard.


We will show that each BQO problem instance can be reduced to an
instance of the problem (\ref{eq:qcqp}). That means that even though the
problem (\ref{eq:qcqp}) has special structure (maximizing a positive-definite quadratic form over a
product of spheres) it still falls into the class of problems that are hard (under the assumption that $P \neq NP$).
The idea is to start with a general instance of BQO and through a set of simple transformations obtain a specific
instance of (\ref{eq:qcqp}), with a block structure $b = \left(1,\ldots,1\right)$. The simple transformations will transform the
BQO matrix $A$ into a correlation matrix, where the optimal solutions will be preserved.

  Let us start a BQO with a corresponding general matrix $A \in \RR^{m\times m}$. Since $x^T A x = x^T
\frac{\left(A + A^T\right)}{2} x$ we can assume that the matrix $A$ is
symmetric. The binary constraints imply that for any diagonal
matrix $D$ the quantity $x^T D x = \sum_i D\left(i,i\right)$ is
constant. This means that for $c > 0$ large enough, we can
replace the objective with an equivalent objective $x^T \left(A + c
\cdot I\right) x$ which is a positive-definite quadratic form.  If we
set $c$ to $\norm{A}_1 + 1$, it guarantees strong diagonal
dominance, a sufficient condition for positive definiteness. From
now on, we assume that the matrix $A$ in the BQO is symmetric and
positive-definite.  Let $g = \max_i{A\left(i,i\right)}$ and let $D \in
\RR^{m\times m}$ be a diagonal matrix with elements $D\left(i,i\right) = g -
A\left(i,i\right)$.

Then the BQO is equivalent to:
\begin{equation}\label{eq:binqpcor}
\begin{aligned}
& \underset{x \in \RR^m}{\text{max}}
& & x^T \frac{\left(A + D\right)}{g} x  \\
& \text{subject to}
& & x\left(i\right)^2 = 1,  \quad\forall i =1,\ldots,m.\\
\end{aligned}
\end{equation}
The matrix $\frac{\left(A + D\right)}{g}$ is a correlation matrix since it
is a symmetric positive-definite matrix with all diagonal entries
equal to $1$. The optimization problem corresponds to a problem
of maximizing a sum of pairwise correlations between univariate
random variables (using block structure notation: $b\left(i\right) = 1,
\forall i = 1,\ldots, m$). This shows that even the simple case
of maximizing the sum of correlations, where the optimal axes are
known and only directions need to be determined, is a NP-hard
problem.

\subsection{Local solutions}\label{subsec:horst}
Given that the problem is NP-hard and assuming $P\neq NP$, it is
natural to use local methods to obtain a perhaps suboptimal
solution.  In this section, we give an algorithm that
provably converges to locally optimal solutions of the problem
(\ref{eq:qcqp}), when the involved matrix $A$ is symmetric and
positive-definite and generic (see \cite{Chu}).

The general iterative procedure is given as Algorithm \ref{algorithm:horst}.
\begin{algorithm}
\caption{Horst algorithm}
\label{algorithm:horst}
{\bf Input:} matrix $A \in \sym_N^+$, block structure $b = \left(n_1,\ldots,n_m\right)$, initial vector $x_0 \in \RR^N$ with $\norm{x^{(i)}} > 0$,  \par
\begin{algorithmic}
\STATE $x \leftarrow x_0;$
\FOR{$iter = 1$ to $maxiter$}
\STATE $x \leftarrow A x;$
\FOR{$i =1$ to $m$}
\STATE $x^{(i)} \leftarrow \frac{x^{(i)}}{\norm{x^{(i)}}}$
\ENDFOR
\ENDFOR
\end{algorithmic}
{\bf Output:} $x$
\end{algorithm}

The algorithm can be interpreted as a generalization of the power iteration method (also known as the Von Mises iteration), a classical approach to finding the largest solutions to the eigenvalue problem $A x = \lambda x$. If the number of views $m=1$, then Algorithm \ref{algorithm:horst} exactly corresponds to the power iteration. Although the algorithm's convergence is guaranteed under the assumptions mentioned above, the convergence rate is not known. In practice we observe linear convergence, which we demonstrate in figures \ref{fig:qcqpConvVar}, \ref{fig:qcqpConvFix}. In figure \ref{fig:qcqpConvVar} we generated $1000$ random\footnote{We used random Gram matrix method to generate random problem instances; the method is described in section \ref{subsec:syndata}} instances of matrices $A$ with block structure $b = \left(2,2,2,2,2\right)$ and for each matrix we generated a starting point $x_0$ and ran the algorithm. The figure depicts the solution change rate on a logarithmic scale ($log_{10} \frac{\norm{x_{old} - x}}{\norm{x}}$). We observe linear convergence with a wide range of rates of convergence (slopes of the lines). Figure \ref{fig:qcqpConvFix} shows the convergence properties for a fixed matrix $A$ with several random initial vectors $x_0$. The problem exhibits a global (reached for $65\%$ initial vectors $x_0$) and a local solution (reached in $35\%$ cases), where the global solution paths converge faster (average global solution path slope equals $-0.08$, as opposed to $-0.05$ for the local solution paths).

\begin{figure*}[htbp]
  \centering
\begin{subfigure}[b]{0.4\textwidth}
    \includegraphics[width=\textwidth]{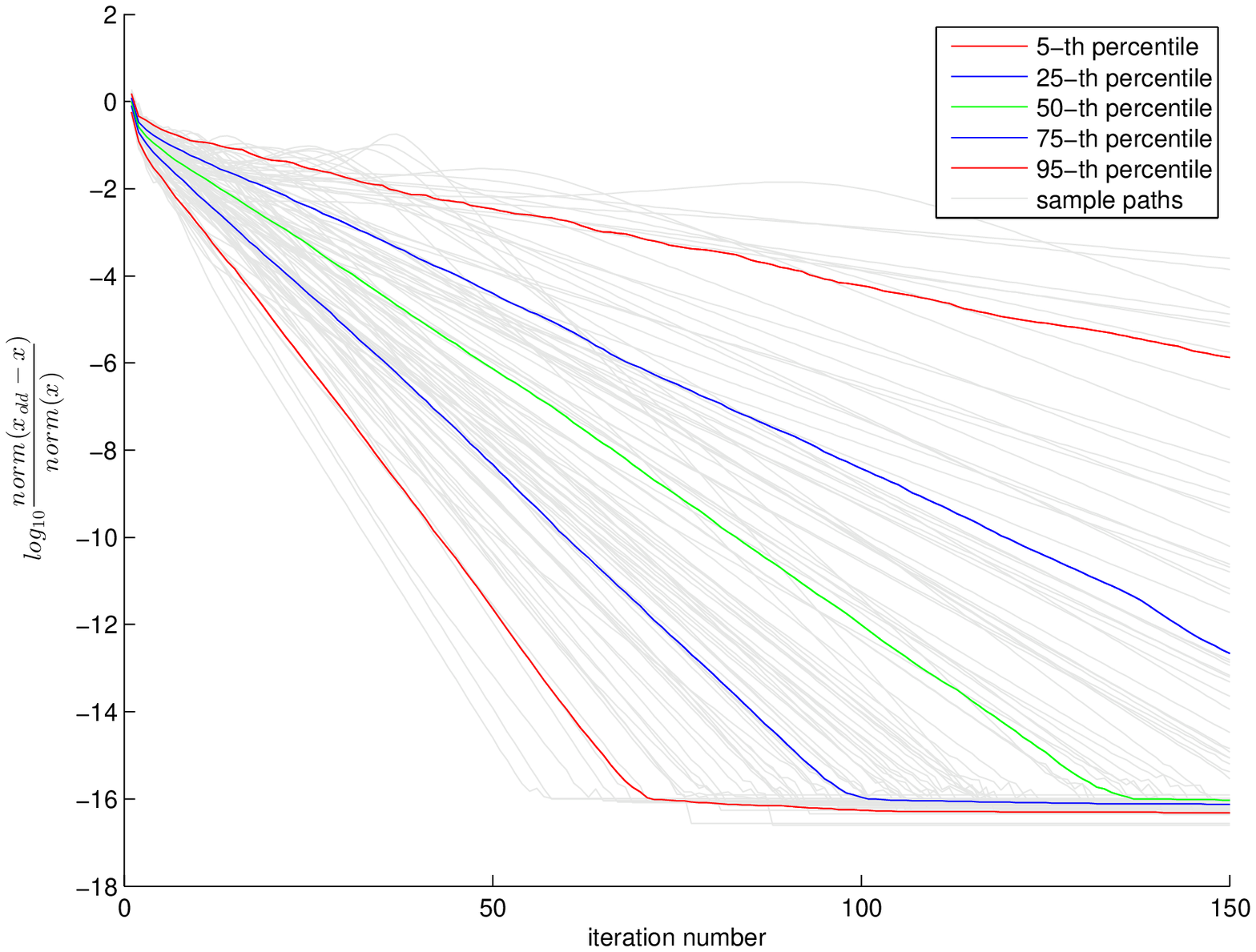}
\caption{\label{fig:qcqpConvVar}}
\end{subfigure}\hspace{0.1\textwidth}
\begin{subfigure}[b]{0.4\textwidth}
    \includegraphics[width=\textwidth]{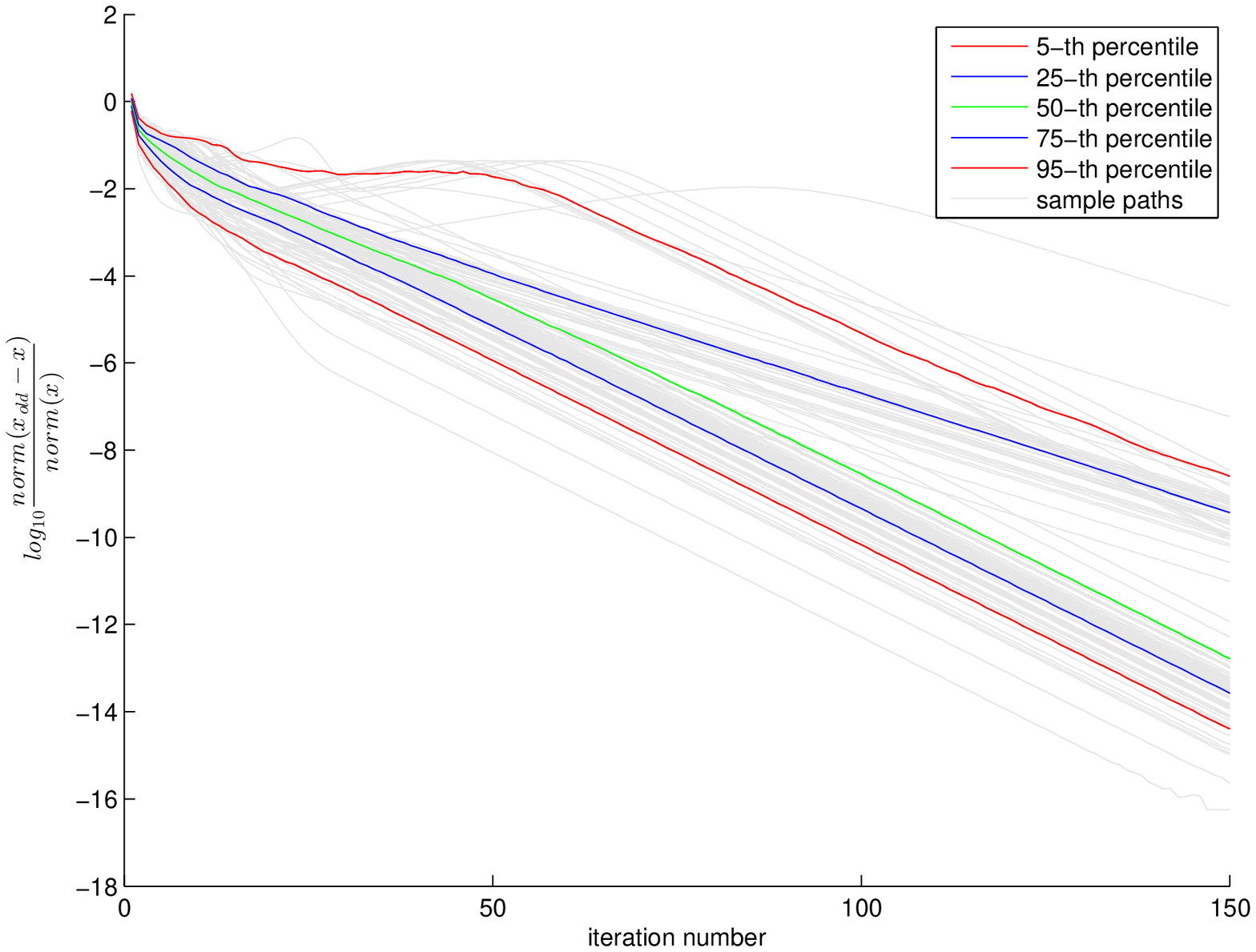}
\caption{\label{fig:qcqpConvFix}}
\end{subfigure}
  \caption{(a) Convergence plot ($1000$ random matrices $A$, one random $x_0$ per problem instance) (b) Convergence plot (single random $A$, $1000$ random initial vectors $x_0$)}
\end{figure*}

\subsection{Global analysis}\label{subsec:globalanalysis}
The above algorithm is highly scalable and as we shall see in
Section~\ref{sec:experiments} often works well in practice.
However the algorithm may not converge to a globally optimal
solution.  We will first present the semidefinite programming
relaxation of the problem in \ref{subsubsec:sdp}. There we will
show how the SDP solution can be used to extract solution
candidates for the original problem. We will prove a bound that
relates the extracted solution quality and the optimal QCQP
objective value.

We will then present a set of upper bounds on the optimal QCQP objective value \ref{subsubsec:upperbounds}.
Such bounds can be used as certificates of optimality (or closeness to optimality) of local solutions (obtained by the local iterative approach for example).

\subsubsection{Semidefinite programming relaxation}\label{subsubsec:sdp}

\noindent\textbf{SDP Relaxation}
Let $A, B_1, \ldots, B_m  \in \RR^{N \times N}$ share the block structure $b := \left(n_1, \ldots, n_m\right), \sum_i b\left(i\right) = N$.
The blocks $B_i^{(k,l)} \in \RR^{n_k \times n_l}$ for $i,k,l = 1,\ldots,m$ are defined as:
$$
B_i^{(k,l)} := \left\{
     \begin{array}{l l}
       I_{n_i} & : k = i, l = i\\
       0_{k,l} & : else
     \end{array}
   \right.
$$
where $I_{n_i} \in R^{n_i \times n_i}$ is an identity matrix and $0_{k,l} \in \RR^{k,l}$ is a matrix with all entries equal to zero.

Using the fact that $x^T A x = \mathrm{trace}\left( A x x^T\right)$ we can rewrite the problem (\ref{eq:qcqp}) as:
\begin{equation}\label{eq:qcqp2}
\tag{QCQP2}
\begin{aligned}
& \underset{x \in \RR^n}{\text{max}}
& &\mathrm{trace}\left(A x x^T\right) \\
& \text{subject to}
& &\mathrm{trace}\left(B_i x x^T\right) = 1,  \quad\forall i =1,\ldots,m\\
\end{aligned}
\end{equation}
We can substitute the matrix $x x^T$ with a general matrix $X \in \sym_{+}^{n}$ constrained it to being rank-one:
\begin{equation*}
\begin{aligned}
& \underset{X \in \sym_{+}^n}{\text{maximize}}
& &\mathrm{trace}\left(A X\right) \\
& \text{subject to}
& &\mathrm{trace}\left(B_i X\right) = 1,  \quad\forall i =1,\ldots,m\\
& & &\mathrm{rank}\left(X\right) = 1.
\end{aligned}
\end{equation*}
\begin{remark}
Matrices $A$ and $B_1,\ldots,B_m$ are symmetric positive-semidefinite matrices.
\end{remark}
By omitting the rank-one constraint we obtain a semi-definite program in standard form:
\begin{equation}\label{eq:sdp}
\tag{SDP}
\begin{aligned}
& \underset{X \in \sym_{+}^n}{\text{max}}
& &\mathrm{trace}\left(A X\right) \\
& \text{subject to}
& &\mathrm{trace}\left(B_i X\right) = 1,  \quad\forall i =1,\ldots,m.\\
\end{aligned}
\end{equation}
\begin{remark}
If the solution of the problem (\ref{eq:sdp}) is rank-one, i.e. $X$ can be expressed as $X = y \cdot y^T$, then $y$ is the optimal solution for (\ref{eq:qcqp}).
\end{remark}
\noindent\textbf{Low rank solutions}
In the following subsection we will show how to extract solutions to QCQP from solutions of the SDP problem. We will present a bound that relates
the global SDP bound, the quality of the extracted solution and the optimal value of QCQP. The bound will tell us how the extracted solution gets close to the optimal QCQP
solution when the SDP solution is close to rank 1.

Let $X^*$ be a solution to the problem (\ref{eq:sdp}) and let $x^{*}$ be the solution to the problem (\ref{eq:qcqp}). Then the following inequality always holds:
 $$\mathrm{trace}\left(A X^{*}\right) \geq \mathrm{trace}\left(A \cdot x^{*} \cdot x^{*T}\right).$$

An easy way to extract a good feasible solution to the problem (\ref{eq:qcqp}) from $X^*$ is to project its leading eigenvector to the set of constraints. Let $b = \left(n_1,\ldots,n_m\right), \sum_i n_i = N $ denote the block structure.

Let $y \in \RR^N, \norm{y^{(i)}} \neq 0$. The projection of vector $y$ to the feasible set of the problem (\ref{eq:qcqp}) is given by map $\pi\left(\cdot\right) : \RR^N \rightarrow \RR^N$, defined as:
$$\pi\left(y\right) := \left(\frac{y^{(1)}}{\norm{y^{(1)}}}, \ldots, \frac{y^{(m)}}{\norm{y^{(m)}}}\right).$$
The quality of the solution depends on spectral properties of matrix $X$ and matrix $A$.

\begin{assumption}\label{thm:assumption1}
Let $b = \left(n_1,\ldots,n_m\right)$ denote the block structure and $ \sum_i n_i = N $.
Let $X^*$ be the solution to the problem (\ref{eq:sdp}). Let $x_k$ denote the $k$-th eigenvector of $X^*$.
The assumption is the following: $$\norm{x_1^{(i)}} > 0, \forall i = 1,\ldots, m.$$
\end{assumption}

\begin{conjecture}\label{thm:conj1}
Assumption \ref{thm:assumption1} holds in general for optimal solutions to  the problem (\ref{eq:sdp}).
\end{conjecture}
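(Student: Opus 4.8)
The plan is to bring convex duality for (\ref{eq:sdp}) to bear on the structure that \emph{every} optimal $X^{*}$ must have, and then to argue that a vanishing block in the leading eigenvector is a non-generic accident rather than something that can happen for a ``typical'' problem instance.

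First I would record that Slater's condition holds for (\ref{eq:sdp}): the block-diagonal matrix $X_0 := \mathrm{Diag}\left(\tfrac{1}{n_1}I_{n_1},\ldots,\tfrac{1}{n_m}I_{n_m}\right)$ is strictly feasible, so strong duality holds and both (\ref{eq:sdp}) and its dual
$$\min_{\mu \in \RR^m}\ \sum_{i=1}^m \mu_i \quad\text{s.t.}\quad D_\mu - A \succeq 0,\qquad D_\mu := \mathrm{Diag}\left(\mu_1 I_{n_1},\ldots,\mu_m I_{n_m}\right),$$
are attained. Fix a primal--dual optimal pair $(X^{*},\mu^{*})$ and write the eigendecomposition $X^{*}=\sum_k \lambda_k x_k x_k^{T}$ with $\lambda_1\ge\lambda_2\ge\cdots\ge 0$. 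Complementary slackness gives $(D_{\mu^{*}}-A)X^{*}=0$, hence $\mathrm{range}(X^{*})\subseteq\ker(D_{\mu^{*}}-A)$, so every $x_k$ with $\lambda_k>0$ satisfies the block eigenvalue equation $\sum_{j=1}^m A^{(i,j)}x_k^{(j)}=\mu_i^{*}\,x_k^{(i)}$ for all $i$. Two facts then drive the argument. On one hand, feasibility gives $\mathrm{trace}(B_i X^{*})=\sum_k \lambda_k \norm{x_k^{(i)}}^2=1$, so for each block $i$ \emph{some} eigenvector of $X^{*}$ has a non-zero $i$-block; the assumption can therefore only fail through the leading one. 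On the other hand, if $\norm{x_1^{(i_0)}}=0$ for some $i_0$, then the $i_0$-th block equation collapses to the non-trivial relation $\sum_{j\ne i_0}A^{(i_0,j)}x_1^{(j)}=0$, linking $A$ to $x_1$.

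From here I see two complementary routes. The first is a genericity argument: show that the set of admissible matrices $A$ (symmetric positive semidefinite with $A^{(i,i)}=I_{n_i}$) for which some optimal $X^{*}$ has a leading eigenvector with a zero block is contained in a lower-dimensional semialgebraic subset of this parameter cone. The ingredients would be (i) that outside a lower-dimensional set the optimizer $X^{*}$ is unique and has simple leading eigenvalue, so ``the'' leading eigenvector is well defined and varies semialgebraically with $A$; and (ii) that, $x_1$ being then an algebraic function of $A$ through the optimality conditions, the relation $\sum_{j\ne i_0}A^{(i_0,j)}x_1^{(j)}=0$ carves out a further lower-dimensional stratum. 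The second route targets only the weaker statement --- which is all the subsequent extraction map $\pi$ needs --- that \emph{some} optimal solution satisfies the assumption: among all optimizers pick the one in the relative interior of the optimal face (the maximal-rank optimizer), and if its leading eigenspace has dimension $>1$ choose the representative $x_1$ inside that eigenspace generically; then argue that $x_1^{(i_0)}=0$ would force block $i_0$ to be completely decoupled from the others at optimum, i.e.\ the instance to be reducible.

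The main obstacle is precisely the transfer of genericity from $A$ to $X^{*}$: the solution map $A\mapsto X^{*}$ is only piecewise analytic / semialgebraic, and one must rule out that the degenerate stratum (non-unique optimum, repeated leading eigenvalue, or a forced zero block) is full-dimensional rather than negligible; controlling how the optimal face of the spectrahedron $\{X\succeq 0 : \mathrm{trace}(B_iX)=1\}$ moves as the objective $A$ is perturbed is the delicate point. A secondary caveat is that the statement cannot hold literally for \emph{every} admissible $A$: if $A$ is block diagonal with respect to a coarsening of the block structure $b$ (the views split into mutually uncorrelated groups), then $X^{*}$ may be supported on a single group and its leading eigenvector genuinely vanishes on the others. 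I would therefore aim to prove the conjecture in the form ``for generic $A$'' (equivalently, ``for some optimal $X^{*}$ and a suitable choice of leading eigenvector''), handling the reducible case separately by applying the result group by group.
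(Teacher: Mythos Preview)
The paper does not prove this statement. It is stated as a \emph{conjecture}, and the authors say explicitly that ``in our experiments, this was always true, but we have been unable to find a proof.'' So there is no paper proof to compare against; you are attempting to settle an open question in the manuscript.

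As for the proposal itself: it is a reasonable outline of a genericity strategy, and you correctly identify the key obstruction (the solution map $A\mapsto X^*$ is only piecewise semialgebraic, and one must show the degenerate locus is lower-dimensional rather than full-dimensional) as well as the fact that the conjecture is literally false without a genericity or irreducibility hypothesis (your block-diagonal counterexample). But what you have written is not a proof: both routes stop precisely at the hard step. In route one you assert that $x_1$ is an algebraic function of $A$ on a generic set and that the vanishing-block condition then cuts out a proper subvariety, but you do not verify either claim; in particular, showing that the relation $\sum_{j\ne i_0}A^{(i_0,j)}x_1^{(j)}=0$ is not identically satisfied on some full-dimensional component requires an actual computation or a transversality argument you have not supplied. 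Route two is even less complete: picking a maximal-rank optimizer and a generic eigenvector in the leading eigenspace does not by itself rule out that the entire eigenspace is orthogonal to block $i_0$, and your reduction to ``the instance must be reducible'' is asserted rather than argued. In short, you have correctly diagnosed the shape of the problem and the necessary caveats, but the proposal remains at the level of a plan; the paper's authors presumably saw a similar plan and could not close it either.
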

The assumption makes the projection to the feasible set,
$\pi(\cdot)$, well defined. In our experiments, this was always
true, but we have been unable to find a proof. The result that we
will state after the next lemma is based on the projection
operator and thus relies on the assumption.
\begin{lemma}
Let $b = \left(n_1,\ldots,n_m\right)$ denote the block structure and $ \sum_i n_i = N $.
Let $X^*$ be the solution to problem (\ref{eq:sdp}).
Let $x_k$ denote the $k$-th eigenvector of $X^*$.
Let $\alpha_i := \frac{1}{\norm{x_1^{(i)}}}$.
If $X^*$ can be expressed as:
$$X^* = \lambda_1  x_1 x_1^T + \lambda_2 x_2 x_2^T,$$
where $x_1$ and $x_2$ have unit length and $\lambda_1 > 1 >  \lambda_2$, then
$$\lambda_1 \leq \alpha_i \alpha_j  \leq \frac{\lambda_1}{1 - \lambda_2}.$$
\end{lemma}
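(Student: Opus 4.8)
The plan is to reduce the whole statement to an elementary bound on the squared block‑norms of the leading eigenvector of $X^*$. I would set $a_i := \norm{x_1^{(i)}}^2$ and $c_i := \norm{x_2^{(i)}}^2$, so that $\alpha_i = 1/\sqrt{a_i}$ and hence $\alpha_i\alpha_j = 1/\sqrt{a_i a_j}$. The asserted chain $\lambda_1 \le \alpha_i\alpha_j \le \lambda_1/(1-\lambda_2)$ is then equivalent to $\frac{1-\lambda_2}{\lambda_1} \le \sqrt{a_i a_j} \le \frac{1}{\lambda_1}$, which in turn follows immediately from the per‑block estimate $\frac{1-\lambda_2}{\lambda_1} \le a_i \le \frac{1}{\lambda_1}$ applied to both $i$ and $j$, then multiplied and square‑rooted. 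Note that every quantity in sight is strictly positive because $\lambda_1 > 1 > \lambda_2 \ge 0$, so the divisions and square roots are legitimate.

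The first real step is to read off an identity from SDP feasibility. For each $i$, the constraint $\mathrm{trace}\left(B_i X^*\right) = 1$ is exactly $\mathrm{trace}\left(X^{*(i,i)}\right) = 1$, since $B_i$ equals $I_{n_i}$ on its $(i,i)$ block and is zero elsewhere. Substituting the rank‑two form $X^* = \lambda_1 x_1 x_1^T + \lambda_2 x_2 x_2^T$, whose $(i,i)$ block is $\lambda_1 x_1^{(i)}x_1^{(i)T} + \lambda_2 x_2^{(i)}x_2^{(i)T}$, and taking the trace yields the key relation
$$\lambda_1 a_i + \lambda_2 c_i = 1, \qquad i = 1,\ldots,m.$$

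The per‑block bounds then drop out. Because $X^* \in \sym_+^n$ we have $\lambda_2 \ge 0$, and because $x_1$ and $x_2$ have unit length, $\sum_i a_i = \sum_i c_i = 1$, so in particular $0 \le c_i \le 1$ for every $i$. For the upper bound, $\lambda_1 a_i = 1 - \lambda_2 c_i \le 1$ (since $\lambda_2 c_i \ge 0$), hence $a_i \le 1/\lambda_1$; for the lower bound, $\lambda_1 a_i = 1 - \lambda_2 c_i \ge 1 - \lambda_2$ (since $c_i \le 1$), hence $a_i \ge (1-\lambda_2)/\lambda_1$. Applying both inequalities to the indices $i$ and $j$ and multiplying gives $\left(\frac{1-\lambda_2}{\lambda_1}\right)^2 \le a_i a_j \le \frac{1}{\lambda_1^2}$, and taking square roots and then reciprocals (which reverses the inequalities) produces $\lambda_1 \le 1/\sqrt{a_i a_j} = \alpha_i\alpha_j \le \lambda_1/(1-\lambda_2)$, as required.

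I do not expect a genuine obstacle: once the bookkeeping is in place the argument is only a couple of lines. The two things to be careful about are (i) recognizing that $\mathrm{trace}\left(B_i X^*\right)=1$ collapses to a statement purely about the $i$‑th diagonal block $X^{*(i,i)}$, and (ii) noticing that it is exactly positive‑semidefiniteness of $X^*$ (giving $\lambda_2 \ge 0$) together with the normalization $\sum_i c_i = 1$ (giving $c_i \le 1$) that power the upper and lower bounds respectively. It is also worth remarking in passing that the hypothesis $\lambda_2 < 1 < \lambda_1$ forces $a_i > 0$, so $\alpha_i$ is automatically well defined here and Assumption~\ref{thm:assumption1} holds in this rank‑two regime without extra work.
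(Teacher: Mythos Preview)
Your proposal is correct and follows essentially the same route as the paper: extract the per-block identity $\lambda_1\norm{x_1^{(i)}}^2 + \lambda_2\norm{x_2^{(i)}}^2 = 1$ from the SDP constraints, bound $\norm{x_1^{(i)}}^2$ on both sides using $0 \le \lambda_2\norm{x_2^{(i)}}^2 \le \lambda_2$, and then invert and multiply to reach $\alpha_i\alpha_j$. Your write-up is in fact a bit more careful than the paper's in spelling out exactly where $\lambda_2 \ge 0$ and $\norm{x_2^{(i)}}^2 \le 1$ come from, and your closing remark that $a_i > 0$ is forced (so Assumption~\ref{thm:assumption1} is automatic in the rank-two case) is a nice observation the paper does not make.
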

\begin{proof}
The constraints in problem (\ref{eq:sdp}) are equivalent to:
$$\lambda_1 \norm{x_1^{(i)}}^2 + \lambda_2 \norm{x_2^{(i)}}^2 = 1, \forall i = 1,\ldots, m.$$
Since $\lambda_2 < 1$ and $\norm{x_2^{(i)}}^2 \leq 1$ it follows that $0 \leq \lambda_2 \norm{x_2^{(i)}}^2  < 1$.
It follows that:
$$  0 < \frac{1- \lambda_2}{\lambda_1} < \norm{x_2^{(i)}}^2 \leq \frac{1}{\lambda_1}.$$
Since $\alpha_i = \frac{1}{x_1^{(i)}}$ it follows that $$ \sqrt{\lambda_1} \leq \alpha_i \leq \sqrt{\frac{\lambda_1}{1-\lambda_2}},$$
and finally:
$$ \lambda_1 \leq \alpha_i \alpha_j \leq \frac{\lambda_1}{1-\lambda_2}, \forall i,j = 1,\ldots,m.$$
\qed
\end{proof}
\begin{proposition}
Let $b = \left(n_1,\ldots,n_m\right)$ denote the block structure and $ \sum_i n_i = N $.
Let $X^*$ be the solution to the problem (\ref{eq:sdp}) and $x^{*}$ be the solution to the problem (\ref{eq:qcqp}).
Let $x_k$ denote the $k$-th eigenvector of $X^*$.
Let $\alpha_i := \frac{1}{\norm{x_1^{(i)}}}$.
Let $\psi := \mathrm{trace}\left(A X^{*}\right)$, $\phi := \mathrm{trace}\left(A \cdot x^{*} \cdot x^{*T}\right)$.
It is obvious that: $$\psi \geq \phi \geq \pi\left(x_1\right).$$
If $X^*$ can be expressed as:
$$X^* = \lambda_1  x_1 x_1^T + \lambda_2 x_2 x_2^T,$$
where $x_1$ and $x_2$ have unit length and $\lambda_1 > 1 >  \lambda_2$,
 then: $$\psi - \pi\left(x_1\right) \leq  \left(\frac{1}{1-\lambda_2} -1\right)  m^2 + \lambda_2 \norm{A}_2.$$
\end{proposition}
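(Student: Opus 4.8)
The plan is to reduce the whole statement to a single double-sum estimate. First, the chain $\psi \geq \phi \geq \pi\left(x_1\right)^T A \pi\left(x_1\right)$ (I read the symbol $\pi\left(x_1\right)$ in the displayed inequalities as the objective value attained there) is immediate: $\psi \geq \phi$ because every feasible $x$ of (\ref{eq:qcqp}) yields the feasible rank-one point $X = x x^T$ of (\ref{eq:sdp}) with equal objective, so the SDP optimum dominates; and $\phi \geq \pi\left(x_1\right)^T A \pi\left(x_1\right)$ because, under Assumption~\ref{thm:assumption1}, $\pi\left(x_1\right)$ has all blocks of unit norm and is hence feasible for (\ref{eq:qcqp}), whose maximum is $\phi$. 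The substance of the proposition is the quantitative gap, so I would concentrate on $\psi - \pi\left(x_1\right)^T A \pi\left(x_1\right)$.

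Let $D_\alpha$ be the block-diagonal matrix whose $i$-th diagonal block is $\alpha_i I_{n_i}$, so that $\pi\left(x_1\right) = D_\alpha x_1$ and $\pi\left(x_1\right)^T A \pi\left(x_1\right) = \sum_{i,j}\alpha_i\alpha_j\left(x_1^{(i)}\right)^T A^{(i,j)} x_1^{(j)}$. By the assumed decomposition of $X^*$ and linearity of the trace, $\psi = \lambda_1 x_1^T A x_1 + \lambda_2 x_2^T A x_2 = \lambda_1 \sum_{i,j}\left(x_1^{(i)}\right)^T A^{(i,j)} x_1^{(j)} + \lambda_2 x_2^T A x_2$. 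Subtracting,
\[
\psi - \pi\left(x_1\right)^T A \pi\left(x_1\right) = T + \lambda_2 x_2^T A x_2, \qquad T := \sum_{i,j}\left(\lambda_1 - \alpha_i\alpha_j\right)\left(x_1^{(i)}\right)^T A^{(i,j)} x_1^{(j)} .
\]
Since $\norm{x_2} = 1$ and $A \succeq 0$, the last term is at most $\lambda_2\norm{A}_2$ (recall $\lambda_2 \geq 0$ because $X^* \succeq 0$), which is the second summand of the claimed bound; everything now reduces to bounding $T$.

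I would split $T$ into its diagonal ($i=j$) and off-diagonal parts. On the diagonal, $A^{(i,i)} = I_{n_i}$ and $\alpha_i\norm{x_1^{(i)}} = 1$ make the $i$-th term equal to $\lambda_1\norm{x_1^{(i)}}^2 - 1$, so the diagonal part is $\lambda_1\norm{x_1}^2 - m = \lambda_1 - m$; summing the equality constraints of (\ref{eq:sdp}) forces $\mathrm{trace}\left(X^*\right) = m$, that is $\lambda_1 + \lambda_2 = m$, so the diagonal part equals $-\lambda_2 \leq 0$. For the off-diagonal part I would combine two estimates: (i) the preceding Lemma gives $\abs{\lambda_1 - \alpha_i\alpha_j} \leq \frac{\lambda_1}{1-\lambda_2} - \lambda_1 = \lambda_1\left(\frac{1}{1-\lambda_2} - 1\right)$; and (ii) since $A \succeq 0$ has identity diagonal blocks, each principal submatrix of $A$ on a pair of blocks $i,j$ is positive-semidefinite, so a Schur-complement argument forces $\norm{A^{(i,j)}}_2 \leq 1$ and hence $\abs{\left(x_1^{(i)}\right)^T A^{(i,j)} x_1^{(j)}} \leq \norm{x_1^{(i)}}\norm{x_1^{(j)}}$. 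With $\sum_{i\neq j}\norm{x_1^{(i)}}\norm{x_1^{(j)}} \leq \left(\sum_i\norm{x_1^{(i)}}\right)^2 \leq m\sum_i\norm{x_1^{(i)}}^2 = m$ (Cauchy--Schwarz) and $\lambda_1 \leq m$ (from $\lambda_1 + \lambda_2 = m$, $\lambda_2 \geq 0$), the off-diagonal part is at most $\lambda_1\left(\frac{1}{1-\lambda_2}-1\right) m \leq m^2\left(\frac{1}{1-\lambda_2}-1\right)$. Adding the nonpositive diagonal part and the $\lambda_2\norm{A}_2$ term yields $\psi - \pi\left(x_1\right)^T A \pi\left(x_1\right) \leq m^2\left(\frac{1}{1-\lambda_2}-1\right) + \lambda_2\norm{A}_2$.

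The load-bearing observation is (ii): without $\norm{A^{(i,j)}}_2 \leq 1$ the cross terms would carry an extra factor $\norm{A}_2$ and one would only reach $m^2\norm{A}_2\left(\frac{1}{1-\lambda_2}-1\right)$. The other point one must not overlook is that the equality constraints of (\ref{eq:sdp}) pin $\mathrm{trace}\left(X^*\right)$ to $m$, which simultaneously annihilates the diagonal part of $T$ and supplies the bound $\lambda_1 \leq m$ used at the end; the rest is routine block-matrix bookkeeping together with Cauchy--Schwarz.
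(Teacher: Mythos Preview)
Your argument is correct. The overall skeleton coincides with the paper's: write $\psi - \pi(x_1)^T A\,\pi(x_1) = T + \lambda_2\,x_2^T A x_2$, bound the second term by $\lambda_2\norm{A}_2$, and control $T$ via the preceding Lemma together with $\abs{(x_1^{(i)})^T A^{(i,j)} x_1^{(j)}}\le \norm{x_1^{(i)}}\,\norm{x_1^{(j)}}$ (i.e.\ $\norm{A^{(i,j)}}_2\le 1$). The paper uses the correlation-coefficient interpretation for this last fact; your Schur-complement justification is equivalent.

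Where you deviate is in how $T$ is finished off. The paper does \emph{not} split $T$ into diagonal and off-diagonal parts and does not invoke the trace identity $\lambda_1+\lambda_2=m$ or Cauchy--Schwarz. Instead it uses both halves of the Lemma: the upper bound $\alpha_i\alpha_j\le \lambda_1/(1-\lambda_2)$ to control the coefficient, and the lower bound $\alpha_i\alpha_j\ge \lambda_1$ to get $\norm{x_1^{(i)}}\norm{x_1^{(j)}}=1/(\alpha_i\alpha_j)\le 1/\lambda_1$, whence $\sum_{i,j}\norm{x_1^{(i)}}\norm{x_1^{(j)}}\le m^2/\lambda_1$ and the $\lambda_1$ cancels exactly to give $(\frac{1}{1-\lambda_2}-1)m^2$. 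Your route replaces that second use of the Lemma by the pair ``$\sum_{i,j}\norm{x_1^{(i)}}\norm{x_1^{(j)}}\le m$ (Cauchy--Schwarz on a unit vector)'' and ``$\lambda_1\le m$ (from $\mathrm{trace}(X^*)=m$)''. Both routes land on the same bound; yours extracts a tiny bonus (the discarded $-\lambda_2$ from the diagonal part) at the cost of one extra ingredient, while the paper's cancellation $\lambda_1\cdot(m^2/\lambda_1)$ is slightly slicker and uses only the Lemma.
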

\begin{proof}
\begin{align*}
\psi &- \pi\left(x_1\right) = \lambda_1 \sum_{i,j} x_1^{(i)T} A^{(i,j)} x_1^{(j)T}  + \\
&\lambda_2\sum_{i,j} x_2^{(i)T} A^{(i,j)} x_2^{(j)T} - \sum_{i,j} \alpha_i \alpha_j x_1^{(i)T} A^{(i,j)} x_1^{(j)T}  \leq \\
&\leq  \sum_{i,j} (\lambda_1 - \alpha_i \alpha_j) x_1^{(i)T} A^{(i,j)} x_1^{(j)T}  + \lambda_2 \norm{A}_2 \leq\\
&\leq -\sum_{i,j} (\lambda_1 - \alpha_i \alpha_j)\cdot |x_1^{(i)T} A^{(i,j)} x_1^{(j)T}| +  \lambda_2 \norm{A}_2 \leq\\
&\leq \left(\frac{\lambda_1}{1-\lambda_2} - \lambda_1 \right)\cdot \sum_{i,j}  \frac{1}{\alpha_i \alpha_j} +  \lambda_2 \norm{A}_2 \leq\\
&\leq \left(\frac{\lambda_1}{1-\lambda_2} - \lambda_1 \right)\cdot \frac{m^2}{\lambda_1} +  \lambda_2 \norm{A}_2 =\\
&= \left(\frac{1}{1-\lambda_2} - 1 \right)\cdot m^2 +  \lambda_2 \norm{A}_2
\end{align*}
\qed
\end{proof}
A similar bound can be derived for the general case, provided that the solution to problem (\ref{eq:sdp}) is close to rank-one.
\begin{proposition}
Let $b = \left(n_1,\ldots,n_m\right)$ denote the block structure and $ \sum_i n_i = N $.
Let $X^*$ be the solution to the problem (\ref{eq:sdp}) and $x^{*}$ be the solution to the problem (\ref{eq:qcqp}).
Let $x_k$ denote the $k$-th eigenvector of $X^*$.
Let $\alpha_i := \frac{1}{\norm{x_1^{(i)}}}$.
Let $\psi := \mathrm{trace}\left(A X^{*}\right)$. 
If $X^*$ can be expressed as:
$$X^* = \lambda_1  x_1 x_1^T + \lambda_2 x_2 x_2^T + \cdots + \lambda_n x_n x_n^T,$$
where each $x_i$ has unit length and $\lambda_1 > 1 >  \underset{i=2,\ldots, n}{\sum} \lambda_i$,
 then:
\begin{align*}
\psi &- \pi\left(x_1\right) \\ &\leq  \left(\frac{1}{1- \underset{i=2,\ldots, n}{\sum} \lambda_i} -1\right)  m^2 + \left(\underset{i=2,\ldots, n}{\sum} \lambda_i\right) \norm{A}_2.
\end{align*}
\end{proposition}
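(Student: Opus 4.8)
The plan is to follow the proof of the preceding proposition essentially verbatim, with the single subdominant eigenvalue $\lambda_2$ replaced everywhere by the tail sum $s := \sum_{k=2}^{n}\lambda_k$, which by hypothesis satisfies $s < 1 < \lambda_1$.

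First I would prove the analogue of the lemma that precedes the rank-$2$ proposition. For each $i$ the constraint $\mathrm{trace}(B_i X^*)=1$ reads $\sum_{k=1}^{n}\lambda_k\norm{x_k^{(i)}}^2=1$. Since $X^*\in\sym_+^N$ all $\lambda_k\ge 0$, and since each $x_k$ has unit length, $\norm{x_k^{(i)}}^2\le 1$; hence $0\le\sum_{k\ge 2}\lambda_k\norm{x_k^{(i)}}^2\le s<1$, which gives $\frac{1-s}{\lambda_1}\le\norm{x_1^{(i)}}^2\le\frac{1}{\lambda_1}$. In particular $\norm{x_1^{(i)}}>0$, so $\pi(x_1)$ is well defined here even without Assumption \ref{thm:assumption1}. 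Taking reciprocals and square roots, $\sqrt{\lambda_1}\le\alpha_i\le\sqrt{\lambda_1/(1-s)}$, and therefore $\lambda_1\le\alpha_i\alpha_j\le\frac{\lambda_1}{1-s}$ for all $i,j=1,\ldots,m$.

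Next, writing $\pi(x_1)$ for the QCQP objective value at the projected vector (whose $i$-th block is $\alpha_i x_1^{(i)}$) and using $\mathrm{trace}(AX^*)=\sum_k\lambda_k\,x_k^T A x_k$, I would expand
$$\psi-\pi(x_1)=\sum_{i,j}(\lambda_1-\alpha_i\alpha_j)\,x_1^{(i)T}A^{(i,j)}x_1^{(j)}+\sum_{k=2}^{n}\lambda_k\,x_k^T A x_k.$$
The tail term is at most $\norm{A}_2\sum_{k\ge2}\lambda_k\norm{x_k}^2=s\norm{A}_2$. For the first sum, $\lambda_1-\alpha_i\alpha_j\le 0$ by the bound just derived, so each term is bounded by $(\alpha_i\alpha_j-\lambda_1)\,\abs{x_1^{(i)T}A^{(i,j)}x_1^{(j)}}$; using $\abs{x_1^{(i)T}A^{(i,j)}x_1^{(j)}}\le\norm{A^{(i,j)}}_2\norm{x_1^{(i)}}\norm{x_1^{(j)}}\le\frac{1}{\alpha_i\alpha_j}$ together with $\alpha_i\alpha_j\le\frac{\lambda_1}{1-s}$ and $\frac{1}{\alpha_i\alpha_j}\le\frac{1}{\lambda_1}$, the first sum is at most $\left(\frac{\lambda_1}{1-s}-\lambda_1\right)\frac{m^2}{\lambda_1}=\left(\frac{1}{1-s}-1\right)m^2$. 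Adding the two estimates yields the claimed inequality.

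I do not expect a genuine obstacle; the argument is a routine generalization of the rank-$2$ case. The one point that needs care is the inequality $\norm{A^{(i,j)}}_2\le 1$, which is used (silently) in the rank-$2$ proof too: it follows from the fact that the principal submatrix of $A$ formed by block-rows and block-columns $i$ and $j$, namely $\left(\begin{smallmatrix}I&A^{(i,j)}\\A^{(j,i)}&I\end{smallmatrix}\right)$, is positive semidefinite, whence $A^{(i,j)T}A^{(i,j)}\preceq I$. The other thing to keep track of is that $\lambda_k\ge0$ and $\norm{x_k}=1$ are exactly what make $s$ dominate both the tail contribution $\sum_{k\ge2}\lambda_k x_k^T A x_k$ and the spread of the products $\alpha_i\alpha_j$.
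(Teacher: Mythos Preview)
Your proposal is correct and follows exactly the route the paper intends: the paper states this proposition without proof, remarking only that ``a similar bound can be derived for the general case,'' and your argument is precisely the rank-$2$ proof with $\lambda_2$ replaced by the tail sum $s=\sum_{k\ge2}\lambda_k$. Your added remarks---that the hypothesis $s<1$ already forces $\norm{x_1^{(i)}}>0$, and that the implicit step $\abs{x_1^{(i)T}A^{(i,j)}x_1^{(j)}}\le\norm{x_1^{(i)}}\norm{x_1^{(j)}}$ relies on $\norm{A^{(i,j)}}_2\le1$, justified via the Schur complement of the PSD principal block---make explicit points the paper glosses over.
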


\subsubsection{Upper bounds on QCQP}\label{subsubsec:upperbounds}
This subsection will present several upper bounds on the optimal QCQP objective value. We will state a simple upper bound based on the spectral properties
of the QCQP matrix $A$. We will then bound the possible values of the SDP solutions and present two constant relative accuracy bounds.

\noindent\textbf{$L_2$ norm bound}
We will present an upper bound on the objective of (\ref{eq:qcqp}) based on the largest eigenvalue of the problem matrix $A$.
\begin{proposition}
The objective value of (\ref{eq:qcqp}) is upper bounded by $m \cdot \norm{A}_2$.
\end{proposition}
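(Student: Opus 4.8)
The plan is to exploit the very rigid structure of the feasible set of (\ref{eq:qcqp}): it is not just contained in, but lies entirely on, a sphere of fixed radius, after which the bound is an immediate consequence of the Rayleigh-quotient estimate for the quadratic form $x^T A x$.

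First I would unpack the constraints. Any feasible $x \in \RR^N$ satisfies $x^{(i)T} x^{(i)} = \norm{x^{(i)}}^2 = 1$ for every $i = 1,\ldots,m$, so by the block decomposition of the norm, $\norm{x}^2 = \sum_{i=1}^m \norm{x^{(i)}}^2 = m$. Thus every feasible point lies on the sphere of radius $\sqrt{m}$ about the origin. Next, since $A \in \sym_N^{+}$, the spectral norm $\norm{A}_2$ equals the largest eigenvalue $\lambda_{\max}(A)$; diagonalizing $A$ in an orthonormal eigenbasis and expanding $x$ in that basis gives the standard inequality $x^T A x \le \lambda_{\max}(A)\,\norm{x}^2 = \norm{A}_2\,\norm{x}^2$ for all $x$. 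Combining the two facts, for every feasible $x$ we get $x^T A x \le \norm{A}_2 \cdot m$, and since this holds for all feasible points it bounds the optimal value; hence the objective of (\ref{eq:qcqp}) is at most $m \cdot \norm{A}_2$.

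There is essentially no obstacle here. The only points worth stating carefully are that the argument uses the equality constraints (so that $\norm{x}^2 = m$ exactly, not merely $\le m$) and that $\norm{A}_2$ is read as the operator norm, consistent with its use elsewhere in the paper. One may additionally remark that the bound is attained when $m = 1$, in which case (\ref{eq:qcqp}) collapses to the leading-eigenvector problem and the maximum is exactly $\norm{A}_2$, so no constant-factor improvement of this simple bound is possible in general.
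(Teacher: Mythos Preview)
Your proof is correct and is essentially the same as the paper's: both observe that the block constraints sum to $\norm{x}^2 = m$ and then apply the Rayleigh-quotient bound $x^T A x \le \norm{A}_2 \norm{x}^2$. The paper phrases this as forming a relaxed problem (adding the redundant constraint $x^T x = m$ and dropping the original ones), while you bound the objective pointwise, but the content is identical.
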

\begin{proof}
The problem (\ref{eq:qcqp}) remains the same if we add a redundant constraint $x^T x = m$ obtained by summing the constraints $\sum_{i= 1}^{m} \left(x^{(i)T}x^{(i)} - 1\right) = 0$. We then relax the problem by dropping the original constraints to get:
\begin{equation}\label{eq:evp}
\begin{aligned}
& \underset{x \in \RR^{N}}{\text{max}}
& & x^T A x\\
& \text{subject to}
& &x^{T} x = m.
\end{aligned}
\end{equation}
Since $\norm{A}_2 = \text{max}_{\norm{x}_2 = 1} x^T A x$ it follows that the optimal objective value of (\ref{eq:evp}) equals $m \cdot \norm{A}_2 $.
\qed
\end{proof}
\noindent\textbf{Bound on possible SDP objective values}
\begin{lemma}\label{eq:lemsdp}
Let $X^*$ be the solution to the problem (\ref{eq:sdp}) and let $\psi := \mathrm{trace}\left(A X^{*}\right)$. Then
$$m \leq \psi \leq m^2.$$
\end{lemma}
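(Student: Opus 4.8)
The plan is to establish the two bounds $m \le \psi$ and $\psi \le m^2$ separately, both by exhibiting or analyzing feasible points of \eqref{eq:sdp} together with the structural facts $A^{(i,i)} = I_{b(i)}$ and $A \in \sym_N^{+}$.

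For the lower bound $m \le \psi$, the idea is to produce an explicit feasible $X$ for \eqref{eq:sdp} whose objective is exactly $m$. The natural candidate is the block-diagonal matrix $X := \mathrm{diag}\!\left(\frac{1}{n_1}I_{n_1}, \ldots, \frac{1}{n_m}I_{n_m}\right)$, i.e. the matrix that is $\frac{1}{n_i}I_{n_i}$ on the $i$-th diagonal block and zero elsewhere. This is clearly in $\sym_N^{+}$, and $\mathrm{trace}(B_i X) = \mathrm{trace}\!\left(\frac{1}{n_i}I_{n_i}\right) = 1$, so it is feasible. Its objective is $\mathrm{trace}(AX) = \sum_i \frac{1}{n_i}\mathrm{trace}(A^{(i,i)}) = \sum_i \frac{1}{n_i} n_i = m$, using $A^{(i,i)} = I_{n_i}$ and the fact that the off-diagonal blocks of $X$ vanish. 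Since \eqref{eq:sdp} is a maximization, $\psi \ge m$. (One could alternatively note that any feasible point of \eqref{eq:qcqp}, lifted to $xx^T$, is SDP-feasible with objective equal to its QCQP value, and that value is at least $m$ — but the explicit block-diagonal witness is cleaner.)

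For the upper bound $\psi \le m^2$, the plan is to control $\mathrm{trace}(AX)$ by the constraints. Write $A$ in block form; the diagonal blocks contribute $\sum_i \mathrm{trace}(A^{(i,i)} X^{(i,i)}) = \sum_i \mathrm{trace}(X^{(i,i)}) = m$ by feasibility. For the off-diagonal blocks, I would use positive semidefiniteness of both $A$ and $X$: since $X \succeq 0$, the diagonal blocks satisfy $\mathrm{trace}(X^{(i,i)}) = 1$, and a Cauchy–Schwarz / trace-inequality argument (for $A, X \succeq 0$ one has $\mathrm{trace}(AX) \le \|A\|_2\,\mathrm{trace}(X)$, and $\mathrm{trace}(X) = \sum_i \mathrm{trace}(X^{(i,i)}) = m$) gives $\mathrm{trace}(AX) \le m\|A\|_2$. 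It then remains to bound $\|A\|_2 \le m$: because $A \succeq 0$ with all diagonal entries equal to $1$ (each block $A^{(i,i)}=I$ contributes ones on the diagonal), we have $\|A\|_2 \le \mathrm{trace}(A) = N$ — but that is too weak. Better: $A$ is a correlation-type matrix in the sense that $A = D^{-T} C D^{-1}$ arising from whitening, so $\|A\|_2$ equals the largest generalized eigenvalue; the clean bound is that $\|A\|_2 \le m$ because $A$ is PSD with unit diagonal blocks and $m$ blocks — concretely, partitioning $x$ into blocks and applying Cauchy–Schwarz to $x^T A x = \sum_{i,j} x^{(i)T}A^{(i,j)}x^{(j)}$ with $\|A^{(i,j)}\|_2 \le 1$ (which follows from PSD-ness and $A^{(i,i)}=A^{(j,j)}=I$) yields $x^T A x \le \left(\sum_i \|x^{(i)}\|\right)^2 \le m \sum_i \|x^{(i)}\|^2 = m\|x\|^2$, hence $\|A\|_2 \le m$. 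Combining, $\psi \le m \cdot \|A\|_2 \le m \cdot m = m^2$.

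The main obstacle is the upper bound, and specifically pinning down $\|A\|_2 \le m$ rigorously: I need the sub-block norm bound $\|A^{(i,j)}\|_2 \le 1$, which should follow from the $2\times 2$ block PSD condition $\begin{pmatrix} I & A^{(i,j)} \\ A^{(j,i)} & I\end{pmatrix} \succeq 0$ (a principal submatrix of $A$ after reordering), and then carefully chain Cauchy–Schwarz twice without losing a factor. The step $\mathrm{trace}(AX) \le \|A\|_2 \mathrm{trace}(X)$ for PSD $A, X$ is standard (diagonalize $A$, or use von Neumann's trace inequality), so I would cite it rather than reprove it. Everything else is bookkeeping with the block structure and the feasibility constraints $\mathrm{trace}(B_i X) = 1$.
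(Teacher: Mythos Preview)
Your proposal is correct and, for the upper bound, essentially identical to the paper's argument: both establish $\|A\|_2 \le m$ by bounding $x^T A x = \sum_{i,j} x^{(i)T}A^{(i,j)}x^{(j)} \le \big(\sum_i \|x^{(i)}\|\big)^2 \le m\|x\|^2$ using $\|A^{(i,j)}\|_2 \le 1$ (the paper phrases this as ``the normalized bilinear form is a correlation coefficient, hence $\le 1$''; you derive it from the $2\times 2$ block PSD condition, which is the same fact), and then both combine this with $\mathrm{trace}(X^*) = m$ from the constraints (the paper via the eigendecomposition $\sum_k \lambda_k = m$, you via $\mathrm{trace}(AX)\le \|A\|_2\,\mathrm{trace}(X)$, which is the same computation).

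The only genuine difference is the lower bound: the paper argues that $\psi$ dominates the QCQP optimum, which in turn is at least $m$ (the ``zero sum of correlations'' case), whereas you exhibit an explicit SDP-feasible block-diagonal $X$ with objective exactly $m$. Your route is slightly cleaner since it avoids having to justify that the QCQP maximum is always $\ge m$; both are short and valid.
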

\begin{proof}
Express $X^*$ as:
$$X^* =  \underset{i=1,\ldots, n}{\sum} \lambda_i x_i x_i^T,$$
where each $x_i$ has unit length and $\lambda_1 \geq \ldots \geq \lambda_N \geq 0$.
The lower bound follows from the fact that $\psi$ upper bounds the optimal objective value of problem (\ref{eq:qcqp}) which is lower bounded by $m$. The lower bound corresponds to the case of zero sum of correlations.

To prove the upper bound first observe that the constraints in (\ref{eq:sdp}) imply that $\underset{i=1,\ldots, n}{\sum}\lambda_i = m$.
Let $y \in \RR^{N}$ and let $\norm{y}_2 = 1$. Let $z := \left(\norm{y^{(1)}}, \ldots, \norm{y^{(m)}}\right)^T.$ Observe that $\norm{z}_2 = 1$ and that $\norm{z z^T}_2 = 1$. Define $e \in \RR^m, e\left(i\right) = 1,  \forall i=1,\ldots,m$.
We will now bound $\norm{A}_2$:
\begin{align*}
  y^T A y &= \sum_{i, j = 1,\ldots,m} y^{(i)T} A^{(i,j)} y^{(j)}\\
& = \sum_{i, j = 1,\ldots,m} \norm{y^{(i)}} \norm{y^{(j)}} \frac{y^{(i)T}}{\norm{y^{(i)}}} A^{(i,j)} \frac{y^{(j)}}{\norm{y^{(j)}}} \leq \\
&\leq \sum_{i, j = 1,\ldots,m} \norm{y^{(i)}} \norm{y^{(j)}} \\
&= e^T (z z^T) e \leq \norm{e}\cdot \norm{z z^T}_2 \cdot \norm{e} = m.
\end{align*}
We used the fact that $\frac{y^{(i)T}}{\norm{y^{(i)}}} A^{(i,j)} \frac{y^{(j)}}{\norm{y^{(j)}}}$ is a correlation coefficient and thus bounded by $1$.
The upper bound follows:
$$\mathrm{trace}\left(A X^{*}\right) = \mathrm{trace}\left(A  \underset{i=1,\ldots, n}{\sum} \lambda_i x^{(i)} x^{(i)T}\right) =$$
$$=  \underset{i=1,\ldots, n}{\sum} \lambda_i x^{(i)T} A x^{(i)} \leq \underset{i=1,\ldots, n}{\sum} \lambda_i \cdot m = m^2.$$
\qed
\end{proof}

\noindent\textbf{Constant relative accuracy guarantee}
We now state a lower bound on the ratio between the objective values of the original and the relaxed problem that is independent on the problem dimension. The bound is based on the following result from \cite{Nesterov98globalquadratic}, stated with minor differences in notation. Let $\mathrm{Square}(\cdot)$ denote componentwise squaring: if $y = \mathrm{Square}(x)$ then $y(i) = x(i)^2$ and let $\mathrm{diag}(X)$ denote the vector corresponding to the diagonal of the matrix $X$.

\begin{theorem}\label{thm:nesterov}
Let $A \in \RR^{N \times N}$ be symmetric and let $\mathcal{F}$ be a set with the following properties:
\begin{itemize}
\item $\mathcal{F}$ is closed, convex and bounded.
\item There exists a strictly positive $v \in \mathcal{F}$.
\item $\mathcal{F} = \left\{ v \in K: B v = c  \right\},$ where $K$ is a convex closed pointed cone in $\RR^N$ with non-empty interior, $B \in \RR^{k\times N}$, $c \neq 0_k$ and $\left\{  v \in \mathrm{int}K : B v = c \right\} \neq \emptyset$.
\end{itemize}
Let
\begin{align*}
\phi^* &:= \mathrm{max}\left\{  x^T A x : \mathrm{square}\left(x\right) \in \mathcal{F}  \right\},\\
\phi_* &:= \mathrm{min}\left\{  x^T A x : \mathrm{square}\left(x\right) \in \mathcal{F}  \right\},\\
\psi^* &:= \mathrm{max}\left\{  \mathrm{trace}\left(A X\right): \mathrm{diag}\left(X\right) \in \mathcal{F}, X \in \sym_N^+  \right\},\\
\psi_* &:= \mathrm{min}\left\{  \mathrm{trace}\left(A X\right): \mathrm{diag}\left(X\right) \in \mathcal{F}, X \in \sym_N^+  \right\},\\
\psi\left(\alpha\right) &:= \alpha \psi^* + \left(1-\alpha\right)\psi_*.
\end{align*}
Then  $$\psi_* \leq \phi_* \leq \psi \left(1 - \frac{2}{\pi}\right) \leq \psi\left(\frac{2}{\pi}\right) \leq \phi^* \leq \psi^*.$$
\end{theorem}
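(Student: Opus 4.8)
The plan is to prove the displayed chain by Gaussian randomized rounding, in the spirit of Goemans--Williamson and Nesterov. The two outer inequalities $\psi_* \le \phi_*$ and $\phi^* \le \psi^*$ are immediate: whenever $\mathrm{square}(x) \in \mathcal{F}$, the rank-one matrix $x x^T$ is feasible for the $\psi$-problem (it is positive semidefinite and $\mathrm{diag}(x x^T) = \mathrm{square}(x) \in \mathcal{F}$), so $x^T A x = \mathrm{trace}(A x x^T)$ lies between $\psi_*$ and $\psi^*$. The middle inequality $\psi(1 - \tfrac{2}{\pi}) \le \psi(\tfrac{2}{\pi})$ is just monotonicity of the affine map $\alpha \mapsto \psi(\alpha) = \psi_* + \alpha(\psi^* - \psi_*)$, using $\psi^* \ge \psi_*$ and $1 - \tfrac{2}{\pi} < \tfrac{2}{\pi}$. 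So the content is the pair of ``inner'' bounds $\phi_* \le \psi(1-\tfrac{2}{\pi})$ and $\psi(\tfrac{2}{\pi}) \le \phi^*$; and since replacing $A$ by $-A$ swaps $(\phi^*,\psi^*) \leftrightarrow (-\phi_*,-\psi_*)$ and turns $\psi(\tfrac{2}{\pi})$ into $-\psi(1-\tfrac{2}{\pi})$, it suffices to establish $\psi(\tfrac{2}{\pi}) \le \phi^*$.

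For that, I would first fix an optimal $X^*$ attaining $\psi^*$; this exists since $\mathcal{F}$ is closed and bounded, which makes $\{X \succeq 0 : \mathrm{diag}(X) \in \mathcal{F}\}$ compact (off-diagonal entries are controlled by the diagonal). Put $D := \mathrm{diag}(X^*)$, $S := D^{1/2}$, and write $X^* = S R S$ with $R$ a correlation matrix (unit diagonal, positive semidefinite, entries in $[-1,1]$); coordinates with $D_i = 0$ have a vanishing row and column in $X^*$ and can be discarded. Draw $\xi \sim \mathcal{N}(0, X^*)$ and set $x_i := \mathrm{sign}(\xi_i)\, S_{ii}$. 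Then $\mathrm{square}(x) = D \in \mathcal{F}$ \emph{deterministically}, so $x$ is a feasible point of the $\phi$-problem; and by the Gaussian sign-correlation identity $E[\mathrm{sign}(\xi_i)\mathrm{sign}(\xi_j)] = \tfrac{2}{\pi}\arcsin R_{ij}$ we obtain $E[x^T A x] = \tfrac{2}{\pi}\,\mathrm{trace}\left(A\, S\, \arcsin[R]\, S\right)$, where $\arcsin[\cdot]$ is applied entrywise.

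Now decompose $\arcsin[R] = R + \Delta$ with $\Delta := \arcsin[R] - R$. The key structural fact is $\Delta \succeq 0$: the Taylor expansion $\arcsin(t) = t + \sum_{k \ge 1} c_k t^{2k+1}$ has all $c_k \ge 0$, so $\Delta = \sum_{k\ge 1} c_k R^{\circ (2k+1)}$ is a nonnegative combination of Hadamard powers of the positive semidefinite $R$, each positive semidefinite by the Schur product theorem, and the series converges entrywise since the entries of $R$ lie in $[-1,1]$. Set $Y := \tfrac{2}{\pi} S \arcsin[R] S$ and $Z := \tfrac{2}{\pi-2} S \Delta S$; both are positive semidefinite, and since $\mathrm{diag}(\arcsin[R]) = \tfrac{\pi}{2}\mathbf{1}$ and $\mathrm{diag}(\Delta) = (\tfrac{\pi}{2}-1)\mathbf{1}$, a direct computation gives $\mathrm{diag}(Y) = \mathrm{diag}(Z) = D \in \mathcal{F}$. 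Hence $Y$ and $Z$ are both feasible for the $\psi$-problem, so in particular $\mathrm{trace}(AZ) \ge \psi_*$. Finally $Y = \tfrac{2}{\pi} S R S + \tfrac{2}{\pi} S \Delta S = \tfrac{2}{\pi} X^* + (1 - \tfrac{2}{\pi}) Z$, whence $E[x^T A x] = \mathrm{trace}(AY) = \tfrac{2}{\pi}\psi^* + (1-\tfrac{2}{\pi})\mathrm{trace}(AZ) \ge \tfrac{2}{\pi}\psi^* + (1-\tfrac{2}{\pi})\psi_* = \psi(\tfrac{2}{\pi})$; and since $\phi^* \ge E[x^T A x]$ because $x$ is always feasible, the bound $\psi(\tfrac{2}{\pi}) \le \phi^*$ follows.

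The hard part will be the positive-semidefiniteness of $\arcsin[R] - R$ via the Schur product theorem together with the nonnegativity of the arcsine Taylor coefficients, since this is the one genuinely non-routine ingredient; everything else is bookkeeping. The second delicate point is choosing the scalings of the auxiliary matrices $Y$ and $Z$ so that their diagonals land back exactly in $\mathcal{F}$ — this is precisely what upgrades the crude estimate $\phi^* \ge \tfrac{2}{\pi}\psi^*$ to the stated $\phi^* \ge \psi(\tfrac{2}{\pi})$ involving $\psi_*$. Finally, I would want to be careful about where the hypotheses on $\mathcal{F}$ actually enter: closedness and boundedness for attainment of all four optima, and the strictly positive / interior point for a Slater-type regularity of the conic description (and to ensure the feasible sets are nonempty), together with a clean treatment of the degenerate coordinates where $D_i = 0$.
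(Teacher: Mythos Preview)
The paper does not actually prove this theorem; it is quoted verbatim as a known result from Nesterov's paper \cite{Nesterov98globalquadratic} (``The bound is based on the following result from \cite{Nesterov98globalquadratic}, stated with minor differences in notation'') and then \emph{applied} to obtain the subsequent $\tfrac{2}{\pi}$-approximation for the SUMCOR problem. So there is no in-paper proof to compare against.

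That said, your proposal is correct and is precisely Nesterov's own argument: Gaussian rounding $x_i = \mathrm{sign}(\xi_i)\sqrt{D_i}$ from $\xi \sim \mathcal N(0,X^*)$, Grothendieck's identity $E[\mathrm{sign}(\xi_i)\mathrm{sign}(\xi_j)] = \tfrac{2}{\pi}\arcsin R_{ij}$, and the Schur-product/Taylor fact that $\arcsin[R]-R \succeq 0$. Your scaling of the auxiliary matrix $Z = \tfrac{2}{\pi-2}S\Delta S$ so that $\mathrm{diag}(Z)=D\in\mathcal F$ is exactly the step that upgrades the naive $\tfrac{2}{\pi}\psi^*$ bound to the sharper $\psi(\tfrac{2}{\pi})$. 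The only places to tighten in a full write-up are the ones you already flag: attainment (compactness of the SDP feasible set via boundedness of $\mathcal F$), the degenerate coordinates $D_i=0$, and absolute convergence of the arcsine series at the endpoints (which holds since all Taylor coefficients are nonnegative and $\arcsin(1)=\tfrac{\pi}{2}<\infty$).
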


\begin{theorem}
Let
$x^{*}$ be the solution to the problem (\ref{eq:qcqp2}) and
$X^*$ be the solution to the problem (\ref{eq:sdp}).
Let $b = \left(n_1,\ldots,n_m\right)$ denote the block structure where $\sum_i n_i = N$.
Let $\phi^*:= \mathrm{trace}\left(A \cdot x^{*} \cdot x^{*T}\right)$,
$\psi^* := \mathrm{trace}\left(A X^{*}\right)$.

Then $$\frac{2}{\pi} \psi^* \leq \phi^* \leq \psi^*.$$
\end{theorem}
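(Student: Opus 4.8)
The plan is to invoke Theorem~\ref{thm:nesterov} with the feasible set $\mathcal{F}$ taken to be the product of unit simplices determined by the block structure $b$, namely
$$\mathcal{F} := \left\{ v \in \RR^N : v \geq 0, \ \textstyle\sum_{j=1}^{n_i} v^{(i)}(j) = 1 \ \forall i = 1,\ldots,m \right\}.$$
First I would verify that $\mathcal{F}$ meets the three hypotheses of Theorem~\ref{thm:nesterov}: it is visibly closed, convex and bounded; the vector $v$ with $v^{(i)}(j) = 1/n_i$ is strictly positive and lies in $\mathcal{F}$; and writing $K := \RR^N_+$ (a closed convex pointed cone with nonempty interior), $B \in \RR^{m \times N}$ for the block-summation matrix, and $c := e \in \RR^m$ the all-ones vector, we have $\mathcal{F} = \{ v \in K : B v = c \}$ with $c \neq 0_m$ and $\{ v \in \mathrm{int}\,K : B v = c \} \neq \emptyset$, again witnessed by the uniform vector.

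Next I would match the quantities $\phi^*$, $\psi^*$ (and implicitly $\phi_*$, $\psi_*$) of Theorem~\ref{thm:nesterov} with the objects in the statement. For $x \in \RR^N$, componentwise squaring gives $\mathrm{square}(x) \in \mathcal{F}$ iff $\sum_{j} x^{(i)}(j)^2 = \norm{x^{(i)}}^2 = 1$ for all $i$, which is exactly the (\ref{eq:qcqp}) / (\ref{eq:qcqp2}) constraint; hence the $\phi^*$ of the theorem equals $\mathrm{trace}\left(A x^{*} x^{*T}\right)$. Likewise, for $X \in \sym_N^+$ we have $\mathrm{diag}(X) \geq 0$ automatically, and $\mathrm{diag}(X) \in \mathcal{F}$ iff $\sum_{j} X^{(i,i)}(j,j) = \mathrm{trace}\left(B_i X\right) = 1$ for all $i$; hence the $\psi^*$ of the theorem is precisely the optimal value of (\ref{eq:sdp}). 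Theorem~\ref{thm:nesterov} then delivers
$$\psi_* \leq \phi_* \leq \psi\left(1 - \tfrac{2}{\pi}\right) \leq \psi\left(\tfrac{2}{\pi}\right) \leq \phi^* \leq \psi^*,$$
so in particular $\phi^* \leq \psi^*$ and $\phi^* \geq \psi\left(\tfrac{2}{\pi}\right) = \tfrac{2}{\pi}\psi^* + \left(1 - \tfrac{2}{\pi}\right)\psi_*$.

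Finally, since $A \in \sym_N^+$ and every feasible $X$ satisfies $X \in \sym_N^+$, we have $\mathrm{trace}\left(A X\right) \geq 0$ on the feasible set, whence $\psi_* \geq 0$. As $1 - \tfrac{2}{\pi} > 0$, this yields $\psi\left(\tfrac{2}{\pi}\right) \geq \tfrac{2}{\pi}\psi^*$, and combining with the chain above gives $\tfrac{2}{\pi}\psi^* \leq \phi^* \leq \psi^*$, as claimed. I expect the only delicate part to be the bookkeeping that rewrites the sphere constraints of (\ref{eq:qcqp}) and the trace constraints of (\ref{eq:sdp}) as membership of $\mathrm{square}(x)$, respectively $\mathrm{diag}(X)$, in $\mathcal{F}$, together with checking the cone/Slater-type condition; once that is in place, the result is a direct specialization of Nesterov's bound, the nonnegativity of $\psi_*$ being immediate from positive semidefiniteness of $A$.
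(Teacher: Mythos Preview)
Your proposal is correct and follows essentially the same route as the paper: apply Theorem~\ref{thm:nesterov} with $\mathcal{F}$ the product of standard simplices determined by $b$, identify the $\mathrm{square}(x)$ and $\mathrm{diag}(X)$ constraints with those of (\ref{eq:qcqp2}) and (\ref{eq:sdp}), and use $\psi_* \geq 0$ (from $A,X \in \sym_N^+$) to drop the $\psi_*$ term in $\psi(2/\pi)$. The only difference is cosmetic: you realize $\mathcal{F}$ directly as $\{v \in \RR^N_+ : Bv = e\}$ with $K = \RR^N_+$, whereas the paper first passes to inequality constraints and then homogenizes by embedding $\mathcal{F}$ in $\RR^{N+1}$ via the cone $K = \{t\cdot[1~x^T]^T : t \geq 0,\, x \in \mathcal{F}\}$ with $B = [1~0_N^T]$, $c = 1$; your conic representation is the more economical of the two.
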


\begin{proof}
We first note that $\psi_* \geq 0$, since $A \in \sym_{+}^n$. This follows from the fact that $\mathrm{trace}\left(A X \right) \geq 0$ for any $X \in \sym_+^n$ (and thus for the minimizer $X_*$). The positiveness of the trace can be deduced from: $\mathrm{trace}\left(A X \right) = \mathrm{trace}\left(C_A C_A^T C_X C_X^T \right) = \mathrm{trace}\left(C_X^T C_A C_A^T C_X \right) = \norm{C_A^T C_X}_F^2 \geq 0 $, where $C_A$ and $C_X$ are Cholesky factors of matrices $A$ and $X$ respectively.

We now show that the problems (\ref{eq:qcqp2}) and (\ref{eq:sdp}) can be reformulated so that the theorem \ref{thm:nesterov} applies.

First we note that the feasible sets in (\ref{eq:qcqp2}) and
(\ref{eq:sdp}) are defined in terms of equalities. Without loss
of generality we can replace them with inequality constraints:
$x^{(i)T}x^{(i)} \leq 1$ in (\ref{eq:qcqp2}) and $\mathrm{trace}
\left(B_i X\right) \leq 1$ in (\ref{eq:sdp}). The feasible sets
defined by the inequalities are convex and bounded. Since the
objective functions in both problems are convex, it follows that
the optima lie on the border.

Next, we add redundant constraints to the two problems respectively: $\mathrm{Square}\left(x^{(i)}\right) \geq 0, \forall i = 1,\ldots,m$  and $X\left(j,j\right) \geq 0, \forall j = 1,\ldots, N$.  

Define $\mathcal{F} = \left\{x \in \RR^N | x^{(i)} \in \Delta^{n_i -1} \right\}$, where $$\Delta^{k} = \left\{ x \in \RR^{k+1} | x\left(i\right) \geq 0, \forall i ~\mathrm{and}~ \sum_i x\left(i\right) = 1 \right\}.$$
$\mathcal{F}$ is a product of standard simplices: $\mathcal{F} = \prod_{i = 1}^m \Delta^{n_i -1}$. It follows that the set is closed, bounded and convex.
 $\mathcal{F}$ can be embedded in $\RR^{N+1}$ in order to obtain a conic formulation.
\begin{align*}
K = \{ t\cdot \left[1~ x^T\right]^T &| t \geq 0, x \in \mathcal{F} \} \\
B = \left[1~ 0_N^T\right]^T,\quad c = 1,& \quad \widetilde{\mathcal{F}} = K \cap \left\{x | B x = c \right\}.
\end{align*}

 Define $v = \left[v_1^T \ldots v_m^T\right]^T,$ where $v_i\left(j\right) = \frac{1}{n_i}$. 
 The vector $\left[1~ v^{T}\right]^T$ is strictly positive and lies in $\mathrm{int}\left(K\right) \cap \left\{x \in \RR^{N+1} | B x = c \right\}$.
 Let $\widetilde{A} \in \RR^{N+1}$ be defined as $\widetilde{A}\left(1,i\right) = 0$, $\widetilde{A}\left(i,1\right) = 0, \forall i$ and $\widetilde{A}\left(i,j\right) = A\left(i-1,j-1\right), \forall i,j > 1$.

 The optimization problem (\ref{eq:qcqp2}) is equivalent(with
 the same optimal objective value) to:
\begin{equation*}
\begin{aligned}
& \underset{x \in \RR^{N+1}}{\text{max}}
& &\mathrm{trace}\left(\widetilde{A} x x^T\right) \\
& \text{subject to}
& &\mathrm{Square}\left(x\right) \in \widetilde{F}\\
\end{aligned}
\end{equation*}
The optimization problem (\ref{eq:sdp}) is likewise equivalent to the problem:
\begin{equation*}
\begin{aligned}
& \underset{X \in \sym_{+}^{N+1}}{\text{max}}
& &\mathrm{trace}\left(\widetilde{A} X\right) \\
& \text{subject to}
& &\mathrm{diag}\left(X\right) \in \widetilde{F}\\
\end{aligned}
\end{equation*}



 Using the definition of $\psi\left(\alpha\right)$ and the fact that $\psi_* \geq 0$ it follows that $\psi\left(\alpha\right) \geq \alpha \psi^* ,\forall \alpha \geq 0$. Substituting $\alpha = \frac{2}{\pi}$ we get the desired result:
 $$\frac{2}{\pi} \psi^* \leq \phi^* \leq \psi^*.$$
 \qed
\end{proof}

Observe that the bound above relates the optimization problems (\ref{eq:qcqp}) and (\ref{eq:sdp}) and not (\ref{eq:qcqp0}) with its SDP relaxation. Let $\widetilde{\phi}$ denote the optimum value of the objective function in (\ref{eq:qcqp0}) and let $\widetilde{\psi}$ denote the optimum value of the objective function of the corresponding SDP relaxation. It is easy to see that $2 \cdot \widetilde{\phi} + m = \phi$ and $2 \cdot \widetilde{\psi} + m = \psi$, which is a consequence of transformations of the original problems to their equivalent symmetric positive-definite problems. The $\frac{2}{\pi}$ constant relative accuracy bound becomes a bit weaker in terms of the original problem and its relaxation. This fact is stated in the following corollary.
\begin{corollary}
$$\widetilde{\phi} \geq \frac{2}{\pi} \widetilde{\psi} - \frac{(1 - \frac{2}{\pi}) m}{2}.$$
\end{corollary}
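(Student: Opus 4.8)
The plan is to reduce the corollary to the constant-relative-accuracy theorem proved just above by means of the two affine identities $2\widetilde{\phi} + m = \phi$ and $2\widetilde{\psi} + m = \psi$ that relate the objective values of (\ref{eq:qcqp0})/(\ref{eq:qcqp}) and of their SDP relaxations. These identities are exactly the bookkeeping from the transformations in Section~\ref{sec:sumcor}: doubling the objective and adding the constant $m$ to pass from (\ref{eq:qcqp0}) to (\ref{eq:qcqp05}), combined with the substitution turning $C^{(i,i)}$ into identity blocks, which leaves the feasible sets and hence the optima in correspondence. I would first record these two identities explicitly (they are asserted in the paragraph preceding the corollary), noting that they hold simultaneously for the primal QCQP value and for the SDP value because the same linear reparametrization is applied to both the rank-constrained and the relaxed problem.

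Next I would invoke the previous theorem, which gives $\tfrac{2}{\pi}\,\psi^* \le \phi^* \le \psi^*$; in the notation of the corollary this is the single inequality $\phi \ge \tfrac{2}{\pi}\,\psi$ that we need. Substituting $\phi = 2\widetilde{\phi} + m$ and $\psi = 2\widetilde{\psi} + m$ yields $2\widetilde{\phi} + m \ge \tfrac{2}{\pi}\bigl(2\widetilde{\psi} + m\bigr)$. Rearranging, $2\widetilde{\phi} \ge \tfrac{4}{\pi}\widetilde{\psi} + \tfrac{2}{\pi}m - m = \tfrac{4}{\pi}\widetilde{\psi} - \bigl(1 - \tfrac{2}{\pi}\bigr)m$, and dividing by $2$ gives precisely
$$\widetilde{\phi} \ge \frac{2}{\pi}\,\widetilde{\psi} - \frac{\bigl(1 - \tfrac{2}{\pi}\bigr)m}{2},$$
as claimed.

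There is essentially no hard step here: the content is entirely in the earlier $\tfrac{2}{\pi}$ theorem, and what remains is a one-line affine substitution. The only point that deserves a sentence of care is justifying that the constant offset $m$ and the factor $2$ are the \emph{same} for the QCQP side and the SDP side — i.e. that relaxing commutes with the reparametrization — but this is immediate because the map $X \mapsto \tfrac{1}{2}(\mathrm{trace}(AX) - m)$ is the identical affine rescaling applied to both problems' objectives while their feasible sets are carried onto one another. I would therefore keep the proof to roughly three lines: state the identities, cite the theorem, substitute and simplify.
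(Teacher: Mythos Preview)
Your proposal is correct and is exactly the argument the paper intends: the corollary is stated immediately after the identities $2\widetilde{\phi}+m=\phi$ and $2\widetilde{\psi}+m=\psi$, and the paper gives no separate proof, so the content is precisely the affine substitution into the preceding $\tfrac{2}{\pi}$ theorem that you carry out.
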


\noindent\textbf{Improved bound on the relative accuracy}
We can exploit additional structure of the problem to obtain a slightly better bound. We use the same conventions as \cite{Nesterov98globalquadratic}.

Define
$$\omega\left(\beta\right) := \beta \arcsin\left(\beta\right) + \sqrt{1 - \beta^2}.$$
The function $\omega\left(\beta\right)$ is increasing and convex with $\omega\left(0\right) = 1$ and $\omega\left(1\right) = \frac{\pi}{2}$.

By theorem 3.1, item 1 in \cite{Nesterov98globalquadratic} we obtain the result:
$$ \max\left\{\frac{2}{\pi}\omega\left(\frac{m}{\psi^*}\right), \frac{m}{\psi^*} \right\}   \psi^* \leq \phi^* \leq \psi^*.$$

This results in a minor improvement of the default bound. For example when $m = 3$ and the fact that $\frac{m}{\psi^*} \geq \frac{1}{3}$ we obtain the following:
$$ \frac{2}{\pi} \psi^* \leq \frac{105}{100}  \cdot \frac{2}{\pi} \psi^* \leq \phi^* \leq \psi^*$$

\section{Sum of correlations extensions}\label{sec:sumcorextensions}
In this section we discuss two extensions of MCCA. By using kernel methods we show
how to find nonlinear dependencies in the data. We then present an extension of the method to finding more then one set of correlation vectors.
\subsection{Dual representation and kernels}\label{subsec:kernels}
We return to the formulation (\ref{eq:qcqp0}):
 \begin{equation*}
\begin{aligned}
& \underset{w \in \RR^N}{\text{max}}
& & \sum_{i = 1}^m \sum_{j = i+1 }^m w^{(i)T} C^{(i,j)} w^{(j)}\\
& \text{subject to}
& &w^{(i)T} C^{(i,i)} w^{(i)} = 1, \quad\forall i = 1,\ldots, m,
\end{aligned}
\end{equation*}
  where $b = \left(n_1,\ldots,n_m\right)$ denotes the block structure and $ \sum_i n_i = N $. In the previous sections we focused on manipulating covariance matrices only and omitted details on their estimation based on finite samples. In this section we will use a formulation that explicitly presents the empirical estimates of covariances, which will enable us to apply kernel methods.
Let $\mathcal{X}$ be a random vector distributed over $\RR^N$ with $E\left(\mathcal{X}\right) = 0$. Let $X \in \RR^{N \times s}$ represent a sample of $s$ observations of $\mathcal{X}$, where each observation corresponds to a column vector. Empirical covariance of $\mathcal{X}$ based on the sample matrix $X$ is expressed as: $$ \overline{Cov\left(\mathcal{X}\right)} = \frac{1}{s - 1}X X^T.$$
In case the number of number of observations, $s$, is smaller than the total number of dimensions $N$, the covariance matrix $\overline{Cov\left(\mathcal{X}\right)}$ is singular. This is problematic both from a numerical
point of view and it leads to overfitting problems. These issues are addressed by using regularization techniques, typically a shrinkage estimator $\overline{Cov\left(\mathcal{X}\right)_{\kappa}}$ is defined as: $$ \overline{Cov\left(\mathcal{X}\right)_{\kappa}} = \left(1-\kappa\right) \frac{1}{s - 1}X X^T + \kappa  I_N,$$ where $\kappa \in \left[0,1\right]$.

Using the block structure $b$, (\ref{eq:qcqp05}) becomes:
 \begin{equation}\label{eq:regqcqp}
\begin{aligned}
& \underset{w \in \RR^N}{\text{max}}
& & \frac{1}{s -1} \sum_{i = 1}^m \sum_{j = i+1}^m w^{(i)T} X^{(i)}X^{(j)T} w^{(j)} \\
& \text{subject to}
& & w^{(i)T} \left(\frac{1- \kappa}{s - 1}X^{(i)} X^{(i)T} + \kappa  I_N\right) w^{(i)} = 1,\\&&& \quad\forall i = 1,\ldots, m.
\end{aligned}
\end{equation}

We will now express each component $w^{(i)}$ in terms the columns of $X^{(i)}$. Let $w$ have block structure $b_w = \left(n_1, \ldots, n_m\right)$ where $\sum_i n_i = N$, and let $y \in \RR^{m\cdot s}$ have block structure $b_y\left(i\right) = s, \forall i = 1,\ldots, m$.

\begin{equation}\label{eq:representer}
\begin{aligned}
w^{(i)} = \sum_{j = 1}^{s} y^{(i)}\left(j\right) X^{(i)}\left(:,j\right) = X^{(i)} y^{(i)},
\end{aligned}
\end{equation}
We refer to $y$ as dual variables.

\begin{lemma}
Solutions to the problem \eqref{eq:regqcqp} can be expressed as \eqref{eq:representer}.
\end{lemma}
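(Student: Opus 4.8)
The plan is to prove this as a representer-type theorem, decomposing each $w^{(i)}$ into a component lying in the span of the columns of $X^{(i)}$ and a component orthogonal to that span, and showing the orthogonal part can be discarded without affecting feasibility or the objective. First I would fix an optimal (or arbitrary feasible) $w$ and write, for each $i$, the orthogonal decomposition $w^{(i)} = X^{(i)} y^{(i)} + u^{(i)}$, where $y^{(i)} \in \RR^s$ and $u^{(i)} \in \RR^{n_i}$ satisfies $X^{(i)T} u^{(i)} = 0$; such a decomposition exists because $\RR^{n_i}$ is the orthogonal direct sum of the column space of $X^{(i)}$ and its orthogonal complement (the kernel of $X^{(i)T}$).

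Next I would check the effect of this decomposition on the objective and the constraints in \eqref{eq:regqcqp}. For the objective, each term $w^{(i)T} X^{(i)} X^{(j)T} w^{(j)}$ depends on $w^{(i)}$ only through $X^{(i)T} w^{(i)} = X^{(i)T} X^{(i)} y^{(i)}$, since $X^{(i)T} u^{(i)} = 0$; hence the objective value is unchanged when $u^{(i)}$ is set to zero. For the $i$-th constraint, expand
\begin{equation*}
w^{(i)T}\left(\tfrac{1-\kappa}{s-1} X^{(i)} X^{(i)T} + \kappa I_N\right) w^{(i)} = \tfrac{1-\kappa}{s-1}\norm{X^{(i)T} w^{(i)}}^2 + \kappa \norm{X^{(i)} y^{(i)}}^2 + \kappa \norm{u^{(i)}}^2,
\end{equation*}
where the cross term vanishes by orthogonality and $X^{(i)T} w^{(i)} = X^{(i)T} X^{(i)} y^{(i)}$ is unaffected by $u^{(i)}$. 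So replacing $w^{(i)}$ by $X^{(i)} y^{(i)}$ leaves the objective fixed and can only decrease the left-hand side of the constraint (by the nonnegative amount $\kappa \norm{u^{(i)}}^2$).

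Then I would argue that one can rescale to restore feasibility without hurting the objective: if $\kappa > 0$ and some $u^{(i)} \neq 0$, the constraint becomes a strict inequality $< 1$ after dropping $u^{(i)}$, but since $X^{(i)} y^{(i)} \neq 0$ (as the original constraint forces the quadratic form to be positive, and with $u^{(i)}$ removed the remaining terms must still be positive — here I should be slightly careful, but genericity or the fact that the original value was $1$ handles the degenerate case), we can scale $y^{(i)} \mapsto \beta_i y^{(i)}$ with $\beta_i \geq 1$ to bring the constraint back to equality; scaling $y^{(i)}$ up multiplies the objective terms involving view $i$ by $\beta_i$, which does not decrease the objective provided the optimum is nonnegative (which it is, since the zero-correlation configuration is attainable and genericity/structure gives a nonnegative optimum). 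When $\kappa = 0$ the constraint is already satisfied with equality after dropping $u^{(i)}$ and no rescaling is needed. Concluding, for any optimal $w$ there is a feasible $y$ of the form \eqref{eq:representer} achieving the same objective value, so solutions can be expressed in the dual form.

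The main obstacle I anticipate is the bookkeeping around the rescaling step and the possibility that $X^{(i)} y^{(i)} = 0$ while $u^{(i)} \neq 0$ (i.e., $w^{(i)}$ lies entirely in the kernel of $X^{(i)T}$): in that case the $i$-th constraint reads $\kappa \norm{u^{(i)}}^2 = 1$, the objective contribution of view $i$ is zero, and one must instead pick any nonzero $y^{(i)}$ and scale it to satisfy the constraint, arguing this does not decrease the total objective — which again needs the nonnegativity of the optimal value, or a genericity assumption ruling this configuration out at an optimum. Everything else is routine orthogonality algebra.
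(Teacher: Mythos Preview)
Your approach---orthogonal decomposition $w^{(i)} = X^{(i)} y^{(i)} + u^{(i)}$ with $X^{(i)T} u^{(i)} = 0$, note the objective depends only on $X^{(i)T} w^{(i)}$, note the constraint slackens by $\kappa\norm{u^{(i)}}^2$, then rescale---is exactly the paper's, which simply frames it as a contradiction handled one view at a time. One correction to your rescaling step: the relevant condition is not that the \emph{total} optimum is nonnegative, but that the partial sum $\sum_{j\neq i} (X^{(i)} y^{(i)})^T X^{(i)} X^{(j)T} w^{(j)}$ is nonnegative at an optimum; the paper secures this by observing that otherwise flipping the sign of $w^{(i)}$ strictly improves the objective (so this cannot happen at an optimum), and it dispatches the degenerate zero-partial-sum case---your anticipated obstacle---by noting that any properly scaled vector in the column space of $X^{(i)}$ then works.
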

\begin{proof}
We will prove this by contradiction. Let $u$ be the optimal solution to \eqref{eq:regqcqp}. Assume that $u^{(1)}$ doesn't lie in the column space of $X^{(1)}$,
 $$u^{(1)} = z_{\bot} + X^{(1)} y^{(1)},$$
 where
$$z_{\bot} \neq 0_{n_1}\quad \text{and}\quad X^{(1)T}z_{\bot} = 0_s.$$ Then $\bar{u}$ defined as $\bar{u}^{(i)} = u^{(i)}, \forall i> 1$ and $\bar{u}^{(1)} = \frac{1}{\gamma}  X^{(1)} y^{(1)} ,$ where
\begin{equation*}
\gamma =\sqrt{ y^{(1)T} X^{(1)T} \left(\frac{1- \kappa}{s - 1}X^{(1)} X^{(1)T} + \kappa  I_N\right) X^{(1)} y^{(1)} }
\end{equation*}
strictly increases the objective function, which contradicts $u$ being optimal. Clearly $\bar{u}$ is a feasible solution. Positive definiteness of $\frac{1- \kappa}{s - 1}X^{(1)} X^{(1)T} + \kappa  I_N$ coupled with the fact that $z_{\bot}^T z_{\bot} > 0$ implies that $0 < \gamma < 1$. Assume without loss of generality that $\sum_{j = 2}^m \left(X^{(1)} y^{(1)}\right)^T X^{(1)}X^{(j)T} u^{(j)} > 0$ (The negative sum would lead to another contradiction by taking $\bar{u}^{(1)} = - u^{(1)}$. If the sum was zero, then any properly scaled (with proper sign) combination of the training data $X^{(1)}$ could be used in place of $u^{(1)}$). The following inequality completes the proof:
\begin{align*}
  \frac{1}{s -1} & \sum_{j = 2}^m u^{(1)T} X^{(1)}X^{(j)T} u^{(j)} =\\
= \frac{1}{s -1} & \sum_{j = 2}^m \left(z_{\bot} + X^{(1)} y^{(1)}\right)^T X^{(1)}X^{(j)T} u^{(j)} = \\
= \frac{1}{s -1}  &\sum_{j = 2}^m \left(X^{(1)} y^{(1)}T\right) X^{(1)}X^{(j)T} u^{(j)} < \\
< \frac{1}{s -1}  &\sum_{j = 2}^m \frac{1}{\gamma}\left(X^{(1)} y^{(1)}\right)^T X^{(1)}X^{(j)T} u^{(j)}.
\end{align*}
\qed
\end{proof}


Let $K_i = X^{(i)T} X^{(i)} \in \RR^{s \times s}$ denote the Gram matrix. We now state regularized covariance formulation (\ref{eq:regqcqp}) in terms of the dual variables:
 \begin{equation}\label{eq:dualregqcqp}
\begin{aligned}
& \underset{y \in \RR^{m\cdot s}}{\text{max}}
& & \frac{1}{s -1} \sum_{i = 1}^m \sum_{j = i+1}^m y^{(i)T} K_i K_j^T y^{(j)} \\
& \text{subject to}
& & y^{(i)T} \left(\frac{1- \kappa}{s - 1}K_i K_i^T + \kappa  K_i\right) y^{(i)} = 1,\\
& &&\quad\forall i = 1,\ldots, m.
\end{aligned}
\end{equation}

The problem is reformulated in terms of Gram matrices based on the standard inner product. This formulation lends itself to using kernel methods (see \cite{shawe-taylor04kernel}) which
enable discovering nonlinear patterns in the data.

Typically the matrices $K_i$ are ill conditioned (even singular when the data is centered) and it is advantageous to constrain the magnitude of dual coefficients as well as the variance in the original problem. We address this by introducing a first order approximation to the dual regularized variance.  Let $$\widetilde{K_i} := \left(\sqrt{\frac{1-\kappa}{s - 1}}K_i + \frac{\kappa}{2} \sqrt{\frac{s-1}{1- \kappa}}I_s\right).$$ Then:
 $$ \overline{Cov\left(\mathcal{X}^{(i)}\right)_{\kappa}} =  \frac{1- \kappa}{s - 1}K_i K_i^T + \kappa  K_i \approx  \widetilde{K_i} \widetilde{K_i}^T.$$
The approximation that has two advantages: it is invertible and factorized, which we exploit in obtaining a convergent local method.
The final optimization is then expressed as:
 \begin{equation}\label{eq:approxdualqcqp}
\begin{aligned}
& \underset{y \in \RR^{m\cdot s}}{\text{max}}
& & \frac{1}{s -1} \sum_{i = 1}^m \sum_{j = i+1}^m y^{(i)T} K_i K_j^T y^{(j)}\\
& \text{subject to}
& & y^{(i)T} \widetilde{K_i} \widetilde{K_i}^T y^{(i)} = 1, \quad\forall i = 1,\ldots, m.
\end{aligned}
\end{equation}
The problem can be interpreted as maximizing covariance while constraining variance and magnitude of dual coefficients. 


\subsection{Computing several sets of canonical vectors}\label{subsec:severalCanonicalVectors}
Usually a one-dimensional representation does not sufficiently capture
all the information in the data and higher dimensional subspaces are
needed. After computing the first set of primal canonical vectors we proceed to computing the next set. The next set should be almost as highly correlated as the first one, but essentially ``different'' from the first one. We will achieve this by imposing additional constraints for every view, namely that all projection vectors in view $i$ are uncorrelated with respect to $\widetilde{K}_i^2$ (similar as in two view regularized kernel CCA, see \cite{FBMJ}).
\par
Let $Y = \left[y_1, \ldots, y_k\right] \in \RR^{m\cdots \times k}$ represent $k$ sets of canonical vectors, where
$$Y^{(\ell)T} \widetilde{K_{\ell}^2} Y^{(\ell)} = I_k  \forall \ell = 1,\ldots, m. $$
The equation above states that each canonical vector has unit regularized variance and that different canonical vectors corresponding to the same view are uncorrelated (orthogonal with respect to $\widetilde{K_i^2}$).



We will now extend the set of constraints in the optimization (\ref{eq:approxdualqcqp}) to enforce the orthogonality.
 \begin{equation}\label{eq:kdimapproxdualqcqp}
\begin{aligned}
& \underset{y \in \RR^{m\cdot s}}{\text{max}}
& & \frac{1}{s -1} \sum_{i = 1}^m \sum_{j = i+1}^m y^{(i)T} K_i K_j^T y^{(j)}\\
& \text{subject to}
& & y^{(i)T} \widetilde{K_i} \widetilde{K_i}^T y^{(i)} = 1, \quad\forall i = 1,\ldots, m\\
& & & Y^{(i)T} \widetilde{K_i} \widetilde{K_i}^T y^{(i)} = 0_k, \quad\forall i = 1,\ldots, m.
\end{aligned}
\end{equation}
In order to use the Horst algorithm, we first use substitutions:
$$Z^{(i)} = \widetilde{K_i}Y^{(i)}, \quad z^{(i)} = \widetilde{K_i}y^{(i)}.$$

We then define operators $$P_i = I_s - \widetilde{K}_i Y^{(i)} Y^{(i)T} \widetilde{K}_i = I_s - Z^{(i)} Z^{(i)T},$$ which map to the space orthogonal to the columns of $\widetilde{K}_i Y^{(i)}$. Each $P_i$ is a projection operator: $P_i^2 = P_i,$ which follows directly from the identities above.
We restate the optimization problem in the new variables:
 \begin{equation}\label{eq:Zkdimapproxdualqcqp}
\begin{aligned}
& \underset{z \in \RR^{m\cdot s}}{\text{max}}
& & \frac{1}{s -1} \sum_{i = 1}^m \sum_{j = i+1}^m z^{(i)T} \widetilde{K_i}^{-T} K_i K_j^T \widetilde{K_j}^{-1} z^{(j)} \\
& \text{subject to}
& & z^{(i)T}  z^{(i)} = 1, \quad\forall i = 1,\ldots, m\\
& & & Z^{(i)T} z^{(i)} = 0_k, \quad\forall i = 1,\ldots, m.
\end{aligned}
\end{equation}
By using the projection operators, the optimization problem is equivalent to:
\begin{equation*}
\begin{aligned}
& \underset{z \in \RR^{m\cdot s}}{\text{max}}
& & \frac{1}{s -1} \sum_{i = 1}^m \sum_{j = i+1}^m z^{(i)T} P_i^T \widetilde{K_i}^{-T} K_i K_j^T \widetilde{K_j}^{-1} P_j z^{(j)}\\
& \text{s.t.}
& & z^{(i)T}  z^{(i)} = 1, \quad\forall i = 1,\ldots, m.
\end{aligned}
\end{equation*}
By multiplying the objective by $2$ (due to symmetries of $P_i, K_i$ and $\widetilde{K_i}$) and shifting the objective function by $\frac{m}{1 - \kappa}$, the problem is equivalent to:
\begin{equation}
\begin{aligned}
& \underset{z \in \RR^{m\cdot s}}{\text{max}}
& & \frac{1}{s -1} \sum_{i = 1}^m \sum_{\substack{j = 1\\ j\neq i}}^m z^{(i)T} P_i^T \widetilde{K_i}^{-T} K_i K_j^T \widetilde{K_j}^{-1} P_j z^{(j)}\\
&&& + \frac{1}{1-\kappa}\sum_{i = 1}^m z^{(i)T}  z^{(i)}\\
& \text{s.t.}
& & z^{(i)T}  z^{(i)} = 1, \quad\forall i = 1,\ldots, m.
\end{aligned}
\end{equation}
This last optimization can be reformulated as:
\begin{equation}\label{eq:projZkdimapproxdualqcqp}
\begin{aligned}
& \underset{z \in \RR^{m\cdot s}}{\text{max}}
& & z^T A z\\
& \text{subject to}
& & z^{(i)T}  z^{(i)} = 1, \quad\forall i = 1,\ldots, m,
\end{aligned}
\end{equation}
where $A \in \RR^{m\cdot s}$ with block structure $b\left(i\right) = s, \forall i = 1,\ldots, m$, defined by:
\begin{equation*}
 A^{(i,j)} = \left\{ \begin{array}{lll}
 \frac{1}{s -1} P_i^T \widetilde{K_i}^{-T} K_i K_j^T \widetilde{K_j}^{-1} P_j  & {\rm for} ~i \neq j\\
\frac{1}{1-\kappa } I_s & {\rm for} ~i = j \end{array}\right\}
\end{equation*}
\begin{lemma}
The block matrix $A$ defined above is positive semidefinite (i.e. $A \in \sym_+^{m\cdot s}$).
\end{lemma}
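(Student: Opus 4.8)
The plan is to exhibit $A$ as a Gram-type matrix, i.e.\ to write $A = M^T M$ (equivalently $A = M M^T$) for a suitable block matrix $M$, since any such matrix is automatically positive semidefinite. First I would isolate the off-diagonal structure. For $i \neq j$ the block is $A^{(i,j)} = \frac{1}{s-1} P_i^T \widetilde{K_i}^{-T} K_i K_j^T \widetilde{K_j}^{-1} P_j$. Writing $K_i = X^{(i)T}X^{(i)}$, so that $K_i K_j^T = X^{(i)T} X^{(i)} X^{(j)T} X^{(j)}$, I would try to factor the whole off-diagonal part as $\frac{1}{s-1} R^T R$ where the $i$-th block column of $R$ is something like $X^{(i)} \widetilde{K_i}^{-1} P_i$ (using symmetry of $P_i$, $\widetilde{K_i}$ and $K_i$). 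The point is that $A^{(i,j)}$ for $i\neq j$ equals $\bigl(P_i \widetilde{K_i}^{-1} K_i\bigr)^T \frac{1}{s-1}\bigl(P_j \widetilde{K_j}^{-1} K_j\bigr)$ up to matching the middle factor $X^{(i)}X^{(i)T}\cdots$; I would write $K_i = X^{(i)T}X^{(i)}$ once and collect the common sample matrix.

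Concretely, define the block matrix $G$ whose $i$-th block column is $G_i := \tfrac{1}{\sqrt{s-1}} X^{(i)} \widetilde{K_i}^{-1} P_i \in \RR^{s \times s}$ (here I use that $X^{(i)} \in \RR^{?\times s}$ has $s$ columns, so $X^{(i)}\widetilde{K_i}^{-1}P_i$ makes sense as an $s\times s$ block after the appropriate transpose bookkeeping; in the dual formulation everything lives in $\RR^{s}$ per view). Then $(G^T G)^{(i,j)} = G_i^T G_j = \tfrac{1}{s-1} P_i^T \widetilde{K_i}^{-T} X^{(i)T} X^{(j)} \widetilde{K_j}^{-1} P_j$. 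For $i \neq j$ this is exactly $A^{(i,j)}$ provided $X^{(i)T}X^{(j)}$ is the cross term; matching the paper's $K_i K_j^T$ requires care about which Gram matrix appears, but the upshot is that the off-diagonal part of $A$ is positive semidefinite because it is $G^TG$ restricted to off-diagonal blocks. The diagonal blocks of $G^TG$ are $\tfrac{1}{s-1}P_i \widetilde{K_i}^{-T} K_i \widetilde{K_i}^{-1} P_i$, which in general is \emph{not} equal to the prescribed diagonal $\tfrac{1}{1-\kappa} I_s$.

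So the real argument is: write $A = (A - \mathrm{bdiag}(A)) + \mathrm{bdiag}(A)$ where $\mathrm{bdiag}$ keeps only diagonal blocks, show the off-diagonal-completed matrix $G^TG$ is PSD with the same off-diagonal blocks as $A$, and then show that $A - G^TG$ is block-diagonal with blocks $\tfrac{1}{1-\kappa} I_s - \tfrac{1}{s-1}P_i \widetilde{K_i}^{-T}K_i\widetilde{K_i}^{-1}P_i$, each of which must be shown PSD. For the latter I would use the definition $\widetilde{K_i} = \sqrt{\tfrac{1-\kappa}{s-1}}K_i + \tfrac{\kappa}{2}\sqrt{\tfrac{s-1}{1-\kappa}}I_s$: since $\widetilde{K_i} \succeq \sqrt{\tfrac{1-\kappa}{s-1}}K_i$ (as $\kappa \geq 0$, $K_i \succeq 0$), one gets $\widetilde{K_i}^{-T} K_i \widetilde{K_i}^{-1} \preceq \sqrt{\tfrac{s-1}{1-\kappa}}\,\widetilde{K_i}^{-1} \preceq \tfrac{s-1}{1-\kappa} I_s$ after another application of the same bound, hence $\tfrac{1}{s-1}\widetilde{K_i}^{-T}K_i\widetilde{K_i}^{-1} \preceq \tfrac{1}{1-\kappa}I_s$, and conjugating by the projection $P_i$ (which only shrinks quadratic forms) preserves the inequality. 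The main obstacle I anticipate is the bookkeeping between the primal Gram matrices $K_iK_j^T$ and the factorization through the sample matrices $X^{(i)}$ — making sure the cross term is genuinely of the form $V_i^T V_j$ for a common family $V_i$ rather than something that only looks symmetric — together with verifying that the projection $P_i$ and the inverse $\widetilde{K_i}^{-1}$ commute appropriately with this factorization; the spectral bound on the residual diagonal block is routine once the matrix inequality $\widetilde{K_i} \succeq \sqrt{\tfrac{1-\kappa}{s-1}} K_i$ is invoked twice.
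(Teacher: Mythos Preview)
Your overall strategy---write $A$ as a Gram matrix plus a block-diagonal PSD remainder---is exactly the paper's. The execution, however, has two genuine gaps. First, factoring through $X^{(i)}$ does not reproduce the off-diagonal blocks: $X^{(i)T}X^{(j)}$ is not $K_iK_j^T = X^{(i)T}X^{(i)}X^{(j)T}X^{(j)}$. The correct choice is $V_i := \tfrac{1}{\sqrt{s-1}}\,K_i\widetilde{K_i}^{-1}P_i \in \RR^{s\times s}$, so that $V_i^TV_j = A^{(i,j)}$ for $i\neq j$ (using $K_i=K_i^T$); this is precisely the paper's factor $B$. Consequently the diagonal block of the Gram part is $\tfrac{1}{s-1}P_i\widetilde{K_i}^{-1}K_i^{\,2}\widetilde{K_i}^{-1}P_i$, with $K_i^2$, not the linear $K_i$ you wrote.

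Second, your spectral chain does not close as stated. From $\widetilde{K_i}\succeq aK_i$ (with $a=\sqrt{(1-\kappa)/(s-1)}$) you correctly get $\widetilde{K_i}^{-1}K_i\widetilde{K_i}^{-1}\preceq a^{-1}\widetilde{K_i}^{-1}$, but ``another application of the same bound'' would need $\widetilde{K_i}\succeq aI$, which fails when $K_i$ has small eigenvalues and $\kappa$ is small; in fact the target inequality $\tfrac{1}{s-1}\widetilde{K_i}^{-1}K_i\widetilde{K_i}^{-1}\preceq\tfrac{1}{1-\kappa}I$ is false for small $\kappa$ (maximize the scalar function $\lambda/(a\lambda+b)^2$). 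With the \emph{correct} $K_i^2$-diagonal the bound is immediate and is exactly the paper's computation: since $\widetilde{K_i}$ and $K_i$ commute,
\[
\widetilde{K_i}^{\,2} - \tfrac{1-\kappa}{s-1}K_i^{\,2} \;=\; \kappa K_i + \tfrac{\kappa^2(s-1)}{4(1-\kappa)}I_s \;\succeq\; 0,
\]
hence $\tfrac{1}{s-1}\widetilde{K_i}^{-1}K_i^{\,2}\widetilde{K_i}^{-1}\preceq\tfrac{1}{1-\kappa}I_s$, and conjugating by the projector $P_i$ (together with $P_i\preceq I$) yields the PSD residual---the paper's $W$.
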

\begin{proof}
$A$ is symmetric, which follows from $P_i = P_i^T$ and $K_i = K_i^T$. Let $z \in \RR^{m\cdot s}$. We will show that $z^T A z > 0$.
Let $W =  \frac{1}{1- \kappa }\sum_{i = 1}^m z^{(i)T} P_i^T \widetilde{K_i}^{-T} \left( \kappa K_i + \frac{\kappa^2 \left(s-1\right)}{4\left(1-\kappa\right)}I_s   \right) \widetilde{K_i}^{-1} P_i z^{(i)}$. Note that $W \geq 0$ (each summand is positive-semidefinite) and $W > 0$ if $\exists i: P_i z^{(i)} = z^{(i)}$.
\begin{align*}
z^T A z &=  \frac{1}{s -1} \sum_{i = 1}^m \sum_{\substack{j = 1\\ j\neq i}}^m z^{(i)T} P_i^T \widetilde{K_i}^{-T} K_i K_j^T \widetilde{K_j}^{-1} P_j z^{(j)}\\& + \frac{1}{1- \kappa }\sum_{i = 1}^m z^{(i)T}  z^{(i)} \geq \\
&\geq \frac{1}{s -1} \sum_{i = 1}^m \sum_{\substack{j = 1\\ j\neq i}}^m z^{(i)T} P_i^T \widetilde{K_i}^{-T} K_i K_j^T \widetilde{K_j}^{-1} P_j z^{(j)}\\& + \frac{1}{1- \kappa }\sum_{i = 1}^m z^{(i)T} P_i^T P_i z^{(i)} = \\
 &= \frac{1}{s -1} \sum_{i = 1}^m \sum_{\substack{j = 1\\ j\neq i}}^m z^{(i)T} P_i^T \widetilde{K_i}^{-T} K_i K_j^T \widetilde{K_j}^{-1} P_j z^{(j)}\\& + \frac{1}{1- \kappa }\sum_{i = 1}^m z^{(i)T} P_i^T \widetilde{K_i}^{-T}  \widetilde{K_i}^T \widetilde{K_i} \widetilde{K_i}^{-1} P_i z^{(i)} = \\
&= \frac{1}{s -1} \sum_{i = 1}^m \sum_{\substack{j = 1\\ j\neq i}}^m z^{(i)T} P_i^T \widetilde{K_i}^{-T} K_i K_j^T \widetilde{K_j}^{-1} P_j z^{(j)}
\\& + \frac{1}{s-1}\sum_{i = 1}^m z^{(i)T} P_i^T \widetilde{K_i}^{-T}  K_i K_i^T \widetilde{K_i}^{-1} P_i z^{(i)}
 + W = \\ 
 &= z^T B B^T z + W \geq 0,
\end{align*}
where $B \in \RR^{m\cdot s \times s}$, defined by $B^{(i)} = \frac{1}{\sqrt{s-1}}(K_i \widetilde{K_i}^{-1}P_i)^T$, with corresponding row block structure $b\left(i\right) = s, \forall i = 1,\ldots, m$.
If $P_i z^{(i)} \neq z^{(i)}$ for some $i$, then the first inequality is strict ($\norm{P_i z^{(i)}} < \norm{z^{(i)}}$). Conversely, if $P_i z^{(i)} = z^{(i)}$ for all $i$, then $W > 0$, hence the last inequality is strict.\qed
\end{proof}

Since matrix $A$ has all the required properties for convergence, we can apply the Algorithm \ref{algorithm:horst}. Solutions to the problem \eqref{eq:kdimapproxdualqcqp} are obtained by back-substitution $y^{(i)} = \widetilde{K_i}^{-1} z^{(i)}$.

\subsection{Implementation}\label{subsec:implementation}
The algorithm involves matrix vector multiplications and inverted
matrix vector multiplications. If kernel matrices are products of
sparse matrices: $K_i = X^{(i)T} X^{(i)}$ with $X^{(i)}$ having $s n$ elements
where $s << n$, then kernel matrix vector multiplications cost $2 n s$
instead of $n^2$. We omit computing the full inverses and rather solve
the system $K_i x = y$ for x, every time $K_i^{-1} y$ is needed. Since
regularized kernels are symmetric and multiplying them with vectors is
fast (roughly four times slower as multiplying with original sparse matrices $X^{(i)}$), an iterative method like conjugate gradient (CG) is
suitable. Higher regularization parameters increase the condition
number of each $\tilde{K}_i$ which speeds up CG convergence.
\par
If we fix the number of iterations, $maxiter$, and number
of CG steps, $C$, the computational cost of computing a
$k$-dimensional representation is upper bounded by: $O\big(C \cdot
maxiter \cdot k^2 \cdot m \cdot n \cdot s \big),$ where $m$ is the
number of views, $n$ the number of observations and $s$ average number
of nonzero features of each observation.
Since the majority of computations is focused on sparse matrix-vector multiplications, the
algorithm can easily be implemented in parallel (sparse matrices are fixed and can be split into multiple
blocks).

So far we have assumed that the data is centered. Centering can efficiently be implemented on the
fly with no changes in asymptotic computational complexity, but we will omit the technical details due to space constraints.

\section{Experiments}\label{sec:experiments}

We evaluated the SDP approach on two scenarios: performance
analysis on synthetic data and performance on finding a common
representation of a cross-lingual collection of documents.

\subsection{Synthetic data}\label{subsec:syndata}
We generated several MCCA problem instances by varying the number
of views and number the of dimensions per view in order to
compare the performance of local search methods and the proposed
SDP relaxation. The main purpose of the experiments was to see
under which conditions and how likely do the global bounds
provide useful information.

 Let $m$ denote the number of views
(sets of variables) and $n_i$ denote the dimensionality of $i$-th
view and $N := \sum_i n_i$. In all cases we used the same number
of dimensions per view ($n_1 = n_2 = \cdots = n_m$). We used
three different methods to generate random correlation matrices.

The first method, the \textbf{random Gram matrices} (see
\cite{Holmes:1991:RCM:105724.105730}, \cite{Bendel_Mickey_78}) ,
generates the correlation matrices by sampling $N$ vectors $v_1,
\ldots, v_n$ for a $N$-dimensional multivariate Gaussian
distribution (centered at the origin, with an identity covariance
matrix), normalizing them and computing the correlation matrix $C
= \left[c_{i,j}\right]_{N \times N}$ as $c_{i,j} := v_i' \cdot v_j$.
The second method, the \textbf{random spectrum}, involves
sampling the eigenvalues $\lambda_1,\ldots,\lambda_N$ uniformly
from a simplex ($\sum_{i=1}^{N} \lambda_i = N$) and generating a
random correlation matrix with the prescribed eigenvalues (see
\cite{Bendel_Mickey_78}).
The final method, \textbf{random 1-dim structures}, involves
generating a correlation matrix that has an approximately (due to
noise) single-dimensional correlation structure. Here we
generated a random $m$ dimensional Gram matrix $B$, and inserted
it into a $N\times N$ identity matrix according to the block
structure to obtain a matrix $C_0$ (Set $C_0\left(i,j\right) = \delta\left(i,j\right)$,
where $\delta$ is the Kronecker delta. Then for $I,J = 1\ldots,m$
override the entries $C_0\left(1+ \sum_{i=1}^{I-1}n_i, 1+
\sum_{i=1}^{J-1}n_i\right) = B\left(I,J\right)$, where we used $1$-based
indexing). We then generated a random Gram matrix $D \in
\RR^{N\times N}$ and computed the final correlation matrix as $C
= \left(1- \epsilon\right)C_0 + \epsilon D$. In our experiments we set
$\epsilon = 0.001$. The purpose of using a random spectrum method
is that as the dimensionality increases, random vectors tend to
be orthogonal, hence the experiments based on random Gram
matrices might be less informative. As we will see later, the
local method suffers the most when all $n_i = 1$, which is an
instance of a BQO problem. By using the approximately
1-dimensional correlation matrix sampling we investigated if the
problem persists when $n_i > 1$.

In all cases we perform a final step that involves computing the
per-view Cholesky decompositions of variances and change of basis
as we showed when we arrived to QCQP reformulation in equation
(\ref{eq:qcqp}).

The experiments are based on varying the number of sets of
variables, $m$, and the dimensionality $n_i$. For each sampling
scenario and each choice of $m$ and $n_i$, we generated $100$
experiments, and computed $1000$ solutions based on Algorithm
\ref{algorithm:horst},  the SDP solution (and respective
global bounds), and looked at the frequencies of the following
events:
\begin{itemize}
\item a \textbf{duality gap} candidate detected (Tables
\ref{tb:rg}, \ref{tb:rs}, \ref{tb:r1d} (a)),
\item \textbf{local convergence} detected (Tables \ref{tb:rg}, \ref{tb:rs},
\ref{tb:r1d} (b))
\item when a local solution is worse
than the SDP-based \textbf{lower bound} (Tables \ref{tb:rg},
\ref{tb:rs}, \ref{tb:r1d} (c)).
\end{itemize}
The possibility of duality gap is
detected when the best local solution is lower than $1\%$ of the
SDP bound. In this case the event indicates only the possibility
of duality gap -- it might be the case that further local
algorithm restarts would close the gap. The local convergence
event is detected when the objective value of two local solutions
differs relatively by at least $10\%$ and absolutely at least by
$0.1$ (both criterions have to be satisfied
simultaneously). Finally, the event of local solution being
bellow the SDP lower bound means that it is bellow
$\frac{2}{\pi}$ of the optimal objective value of the SDP
relaxation.

We find that regardless of how we generated the data, the lower
SDP bound is useful only when $n_i = 1$ (Table \ref{tb:rg},
\ref{tb:rs}, \ref{tb:r1d} (c)) and the results are similar for
different choices of $m$. There are, however rare cases (less
than $0.1\%$) where the lower bound is useful for $n_i = 2$ and
even rarer (less than $0.01\%$) for $n_i = 3$.

The chance of local convergence increases as the number of views
$m$ increases which can be consistently observed for all choices
of $n_i$ and sampling strategies. Generating a problem where the
local algorithm likely converges to a local solution is less
likely as the dimensionality increases in the generic case
(Tables \ref{tb:rg}, \ref{tb:rs}).
%
%
In the case of noisy embeddings of 1-dimensional correlation structures the dependence on $n_i$
behaves differently: the local convergence (see Table \ref{tb:r1d_lc}) for the case $\left(m=5, n_i=3\right)$ is more likely
than for the case $\left(m=5, n_i =2\right)$, which is a curiosity (in the general case, increasing $n_i$ reduces that chance of local convergence, see Table \ref{tb:rs_lc}, Table \ref{tb:rg_lc}).

The relationship between $m$ and $n_i$ and the possibility of
duality gap behaves similarly as the local convergence -
increasing $m$ increases it and increasing $n_i$ decreases it (Table \ref{tb:rg_dg}, Table \ref{tb:rs_dg}),
except in the case of noisy 1-dim correlation structures, where
we observe the same anomaly when $n_i = 2$ (Table \ref{tb:r1d_dg}).

To summarize, we illustrated the influence of $m$ and $n_i$ on
the performance of the Algorithm \ref{algorithm:horst} and
demonstrated that there exist sets of problems with nonzero
measure where the SDP bounds give useful information.
\begin{table}
\begin{center}
\caption{\label{tb:rg} Random Gram matrix sampling}
\begin{subtable}{.5\textwidth}
\caption{\label{tb:rg_dg} Possible duality gap}
\centering\begin{tabular}{|l|c|c|c|}
\hline
&\textbf{$n_i$ = 3}&\textbf{$n_i$ = 2}&\textbf{$n_i$ = 1}\\\hline
\textbf{$m$ = 5}&0\%&5\%&17\%\\\hline
\textbf{$m$ = 3}&0\%&0\%&9\%\\\hline
\end{tabular}

\end{subtable}\vspace{0.1cm}
\begin{subtable}{.5\textwidth}
\caption{\label{tb:rg_lc} Local convergence}
\centering\begin{tabular}{|l|c|c|c|}
\hline
&\textbf{$n_i$ = 3}&\textbf{$n_i$ = 2}&\textbf{$n_i$ = 1}\\\hline
\textbf{$m$ = 5}&1\%&5\%&48\%\\\hline
\textbf{$m$ = 3}&0\%&1\%&26\%\\\hline
\end{tabular}

\end{subtable}\vspace{0.1cm}
\begin{subtable}{.5\textwidth}
\caption{\label{tb:rg_lb} Local solution below lower SDP bound}
\centering\begin{tabular}{|l|c|c|c|}
\hline
&\textbf{$n_i$ = 3}&\textbf{$n_i$ = 2}&\textbf{$n_i$ = 1}\\\hline
\textbf{$m$ = 5}&0\%&0\%&14\%\\\hline
\textbf{$m$ = 3}&0\%&0\%&12\%\\\hline
\end{tabular}

\end{subtable}%
\end{center}
\end{table}

\begin{table}
\begin{center}
\caption{\label{tb:rs} Random spectrum sampling}
\begin{subtable}{.5\textwidth}
\caption{\label{tb:rs_dg} Possible duality gap}
\centering\begin{tabular}{|l|c|c|c|}
\hline
&\textbf{$n_i$ = 3}&\textbf{$n_i$ = 2}&\textbf{$n_i$ = 1}\\\hline
\textbf{$m$ = 5}&0\%&5\%&36\%\\\hline
\textbf{$m$ = 3}&0\%&1\%&20\%\\\hline
\end{tabular}

\end{subtable}\vspace{0.1cm}
\begin{subtable}{.5\textwidth}
\caption{\label{tb:rs_lc} Local convergence}
\centering\begin{tabular}{|l|c|c|c|}
\hline
&\textbf{$n_i$ = 3}&\textbf{$n_i$ = 2}&\textbf{$n_i$ = 1}\\\hline
\textbf{$m$ = 5}&1\%&3\%&50\%\\\hline
\textbf{$m$ = 3}&0\%&0\%&31\%\\\hline
\end{tabular}

\end{subtable}\vspace{0.1cm}
\begin{subtable}{.5\textwidth}
\caption{\label{tb:rs_lb} Local solution below lower SDP bound}
\centering\begin{tabular}{|l|c|c|c|}
\hline
&\textbf{$n_i$ = 3}&\textbf{$n_i$ = 2}&\textbf{$n_i$ = 1}\\\hline
\textbf{$m$ = 5}&0\%&0\%&15\%\\\hline
\textbf{$m$ = 3}&0\%&0\%&16\%\\\hline
\end{tabular}

\end{subtable}%
\end{center}
\end{table}
\begin{table}[t]
\begin{center}
\caption{\label{tb:r1d} Random 1-dim structure sampling}
\begin{subtable}{.5\textwidth}
\caption{\label{tb:r1d_dg} Possible duality gap}
\centering\begin{tabular}{|l|c|c|c|}
\hline
&\textbf{$n_i$ = 3}&\textbf{$n_i$ = 2}&\textbf{$n_i$ = 1}\\\hline
\textbf{$m$ = 5}&24\%&16\%&23\%\\\hline
\textbf{$m$ = 3}&7\%&4\%&7\%\\\hline
\end{tabular}

\end{subtable}\vspace{0.1cm}
\begin{subtable}{.5\textwidth}
\caption{\label{tb:r1d_lc} Local convergence}
\centering\begin{tabular}{|l|c|c|c|}
\hline
&\textbf{$n_i$ = 3}&\textbf{$n_i$ = 2}&\textbf{$n_i$ = 1}\\\hline
\textbf{$m$ = 5}&9\%&6\%&51\%\\\hline
\textbf{$m$ = 3}&0\%&0\%&31\%\\\hline
\end{tabular}

\end{subtable}\vspace{0.1cm}
\begin{subtable}{.5\textwidth}
\caption{\label{tb:r1d_lb} Local solution below lower SDP bound}
\centering\begin{tabular}{|l|c|c|c|}
\hline
&\textbf{$n_i$ = 3}&\textbf{$n_i$ = 2}&\textbf{$n_i$ = 1}\\\hline
\textbf{$m$ = 5}&0\%&0\%&13\%\\\hline
\textbf{$m$ = 3}&0\%&0\%&15\%\\\hline
\end{tabular}

\end{subtable}%
\end{center}
\end{table}
\subsection{Multilingual document collection}\label{subsec:documents}

Applications of canonical correlation analysis on collections of
documents have been demonstrated in dimensionality reduction,
cross-lingual document retrieval and classification
\cite{ccatext} \cite{ccatextdva}, extracting multilingual topics
from text \cite{mcca}, detecting bias in news
\cite{ccanewsbias}. In this section we explore the behavior of
Algorithm \ref{algorithm:horst} with respect to the global
bounds. We will start by describing the data and then describe a method to
reduce the dimensionality of the data in order to apply the SDP
bounds.

\noindent\textbf{Data set and preprocessing}
Experiments were conducted on a subset of EuroParl, Release v3,
\cite{europarl}, a multilingual parallel corpus, where our subset
includes Danish, German, English, Spanish, Italian, Dutch,
Portuguese and Swedish language. We first removed all documents
that had one translation or more missing. Documents (each
document is a day of sessions of the parliament) were then
arranged alphabetically and split into smaller documents, such that
each speaker intervention represented a separate document. We
removed trivial entries (missing translation) and after that
removed all documents that were not present in all eight
languages.

Thus we ended up with $12,000$ documents per
language. They roughly correspond to all talks between 2.25.1999
and 3.25.1999. We then computed the bag of words (vector space)
\cite{Salton88term-weightingapproaches} model for each language,
where we kept all uni-grams, bi-grams and tri-grams that occurred
more than thirty times. For example: "Mr", "President" and
"Mr\_President" all occurred more than thirty times in the
English part of the corpus and they each represent a dimension in
the vector space. This resulted in feature spaces with
dimensionality ranging from $50,000$ (English) to $150,000$
(German). Finally we computed the tf-idf weighting and normalized
every document for each language. We described how we obtained
corpus matrices $X^{(i)}$ for each language, where all the
matrices have $12,000$ columns and the columns are aligned
($X^{(i)}\left(:,\ell\right)$ and $X^{(j)}\left(:,\ell\right)$ are a translation of
each other). In section \ref{sec:sumcorextensions} we showed how to
derive the QCQP problem, given a set of input matrices $X^{(i)}$.

\noindent\textbf{Random projections and multivariate regression}
Applying the relaxation techniques to covariance matrices arising
from text presents a scalability problem, since both the number
of features (words in vocabulary) and number of documents can be
large. Typical SDP solvers can find solutions to relaxed forms of
QCQPs with up to a few thousand original variables. We now
propose a method to address this issue. The main goal is to
reduce the dimensionality of the feature vectors which results in
tractable SDP problem dimensions.
One way to analyze a monolingual document collection is to
perform singular value decomposition on the corpus matrix, a
technique referred to as Latent Semantic Indexing
(LSI)\cite{lsi}. A set of largest singular vectors can be used as
a new document basis for dimensionality reduction. Expressing the
documents with the basis of $k$ largest singular vectors is
optimal with respect to Frobenious norm reconstruction error. If
computing the basis is too expensive, one can generate a random
larger set of basis vectors that achieve similar reconstruction
errors, a technique referred to as random projections. Although
the random projection basis is not informative in the sense that
LSI basis is (topics extracted by LSI reflect which topics are
relevant for the corpus, as opposed to random topics), they can
both achieve comparable compression qualities.

A variant of LSI for multiple languages, Cross-Lingual LSI
(CL-LSI)\cite{ cl_lsi}, first joins all feature spaces thus
obtaining a single multilingual corpus matrix (single language
corpus matrices are stacked together). CL-LSI then proceeds as
standard LSI by computing the singular value decomposition of the
multilingual corpus matrix. Applying random projections instead
of SVD does not work directly; random multilingual topic vectors
destroy cross lingual information: a fact which can be observed
experimentally.

Our approach is based on the following idea. Generate a set of random vectors for one language and use Canonical Correlation Analysis Regression (CCAR)\cite{ccar} (a method similar to ridge regression) to find their representatives in the other languages. Repeat the procedure for each of the remaining languages to prevent bias to a single language. We hypothesize that restricting our search in the spaces spanned by the constructed bases still leads to good solutions. The procedure is detailed in Algorithm \ref{algorithm:rpgen}.

Let $m$ be the number of vector spaces corresponding to different
languages and $n_i$ the dimensionality of the $i-th$ vector
space. Let $X^{(i)} \in \RR^{n_i \times N}$ represent the aligned
document matrix for the $i$-th language.

\begin{algorithm}
\caption{Random projections basis generation}
\label{algorithm:rpgen}
{\bf Input:} matrices $X^{(1)},\ldots X^{(m)}$, $\gamma$ - the regularization coefficient, $k$ - the number of projections per block
\begin{algorithmic}
\FOR{$i = 1$ to $m$}
\STATE $P_{(i,i)} :=$ random $n_i \times k$ matrix where each element is sampled $i.i.d.$ from standard normal distribution.
\STATE Re-scale each column of $P_{(i,i)}$ so that its norm is equal to $\sqrt{\frac{n_i}{k}}$.
\FOR{$j = 1$ to $m$}
\IF {$j = i$}
 \STATE continue
\ENDIF
\STATE  $\alpha_{(i,j)} :=  \left(\left(1-\gamma\right) X^{(j)} X^{(j)T} + \gamma  I_j \right)^{-1}$
\STATE  $P_{(i,j)} :=  \alpha_{(i,j)} X^{(j)} X^{(i)T}  P_{(i,i)},$ where $I_j$ is the $n_j \times n_j$ identity matrix.
\ENDFOR
\ENDFOR
\\
\end{algorithmic}
{\bf Output:} matrices $P_{(i,j)} \;\text{for}\; i,j = 1,\ldots,m$
\end{algorithm}

The matrices $P_{(i,1)}, \ldots, P_{(i,m)}$ form the bases of
vector spaces corresponding to $X^{(1)},\ldots, X^{(m)}$. Let
$P_i := \left[P_{(1,i)}, \ldots, P_{(m,i)}\right]$ denote the full basis for
the $i$-th language.
We now experimentally address two questions: does the restricted
space enable us to find stable patterns and what do the SDP
bounds tell us.



\noindent\textbf{Experimental protocol}
 The experiments were conducted on the set of five EuroParl
 languages: English, Spanish, German, Italian and Dutch. We set
 $k = 10$ which corresponds to $n_i = 50$ dimensions per view, so
 the QCQP matrix will be of size $250 \times 250$. We randomly
 select $5000$ training documents and $1000$ test documents.  For
 a range of random projection regularization parameters $\gamma$,
 we compute the mappings $P_i$ (based on the train set) and
 reduce the dimensionality of the train and test sets. Then, for
 a range of QCQP regularization parameters $\kappa$, we set up the
 QCQP problem, compute $1000$ local solutions (by Horst
 algorithm) and solve the SDP relaxation. The whole procedure is
 repeated $10$ times.

For each $(\gamma, \kappa)$ pair we measured the sum of
correlations on the test and train sets. In Table
\ref{tb:textTrainTestSumcor} we report sums of correlations
averaged over $10$ experimental trials. The maximal possible sum
of correlations for five datasets equals to $\binom{5}{2} = 10$.
We observe that regularizing the whole optimization problem is
not as vital as regularizing the construction of random
projection vectors. This is to be expected since finding the
random projection vectors involves a regression in a high
dimensional space as opposed to solving a lower dimensional
QCQP. Selecting $\gamma = 0.1$ leads to perfectly correlated
solutions on the training set for all $\kappa$. This turns out to
be over-fitted when we evaluate the sum of correlations on the
test set. Note that higher $\kappa$ values in this case improve the
performance on the test set but only up to a certain level below
$7.5$. As we increase $\gamma$ to $0.5$, we see a reduction in
overfitting and $\gamma = 0.9$ results in comparable performance
on the test and train sets (the patterns are stable). We have
demonstrated a technique to reduce the dimensionality of the
original QCQP problem which still admits finding stable
solutions. The reduced dimensionality enables us to investigate
the behavior of the SDP relaxation.

For the SDP bounds we observed behavior that was similar to the high-dimensional synthetic (generic) case. That is
we found that the potential duality gap was very small and that the SDP and the Horst algorithm yielded the same
result. For this reason we omit the SDP results from Table \ref{tb:textTrainTestSumcor}.

\begin{table}[tbp]
\begin{center}
\caption{\label{tb:textTrainTestSumcor} Train and test sum of correlation}
\begin{subtable}{.5\textwidth}
\caption{\label{tb:trainText} Train set sum of correlations}
\centering\begin{tabular}{|l|c|c|c|c|}
\hline
&\textbf{$\gamma =$0.1}&\textbf{$\gamma =$0.5}&\textbf{$\gamma =$0.9}&\textbf{$\gamma =$0.99}\\\hline
\textbf{$\kappa =$0.01}&10.0&9.8&9.8&9.8\\\hline
\textbf{$\kappa =$0.1}&10.0&9.8&9.8&9.8\\\hline
\textbf{$\kappa =$0.5}&10.0&9.8&9.8&9.8\\\hline
\textbf{$\kappa =$0.9}&10.0&9.8&9.8&9.8\\\hline
\textbf{$\kappa =$0.99}&10.0&9.8&9.7&9.8\\\hline
\end{tabular}

\end{subtable}\vspace{0.1cm}
\begin{subtable}{.5\textwidth}
\caption{\label{tb:testText} Test set sum of correlations}
\centering\begin{tabular}{|l|c|c|c|c|}
\hline
&\textbf{$\gamma =$0.1}&\textbf{$\gamma =$0.5}&\textbf{$\gamma =$0.9}&\textbf{$\gamma =$0.99}\\\hline
\textbf{$\kappa =$0.01}&5.8&8.6&9.6&9.8\\\hline
\textbf{$\kappa =$0.1}&6.2&8.6&9.6&9.8\\\hline
\textbf{$\kappa =$0.5}&7.0&8.6&9.6&9.8\\\hline
\textbf{$\kappa =$0.9}&7.4&8.8&9.6&9.8\\\hline
\textbf{$\kappa =$0.99}&7.4&8.8&9.6&9.8\\\hline
\end{tabular}

\end{subtable}
\end{center}
\end{table}

\section{Discussion}\label{sec:discussion}

In the paper we studied a generalization of CCA to more than two
sets of variables. We showed that the complexity of the problem
is NP-hard and described a locally convergent method as well as
presented how to generalize the method to the nonlinear case with
several canonical variates.  Experimentally, we observe that the
performance of the local method (with linear convergence) is
generally good, although we identified problem settings where the
local method can be far from global optimality. We presented a
SDP relaxation of the problem, which can be used to obtain new
local solutions and to provide certificates of optimality. The
usefulness of the bounds was tested on synthetic problem
instances and a problems related to cross-lingual text
mining. The high dimensional nature of documents and the size of
the document collections result in untractable memory
requirements. We solved the issue by proposing a preprocessing
step based on random projections.

Future work includes analyzing the complexity of the other
generalizations proposed in \cite{Kettenring}. We found that
noisy 1-dimensional embeddings present difficulties for the local
approach as opposed to generic problem structures. A natural
question is, are there other problem structures that result in
suboptimal behavior of the local approach.  We presented result
based on textual data, however, this setting appears in many
setting and we plan to extend the list of applications to include
other modalities, such as images, sensor streams and graphs
(social media analysis).

\bibliographystyle{plain}\bibliography{qcqp_sdp}

\appendix

\section{Notation}\label{sec:notation}

This section reviews the notation used throughout the paper.
\begin{itemize}
\item Column vectors are denoted by lowercase
letters, e.g. $x$.
\item Matrices are denoted by uppercase letters,
e.g. $X$.
\item Constants will be denoted by letters of Greek alphabet,
e.g. $\alpha$.
\item Row vectors and transposed matrices are denoted
by $x^T$ and $X^T$ respectively.
\item Subscripts are used to
enumerate vectors or matrices, e.g. $x_1, x_2$, $X_1$, except in the
special case of the identity matrix, $I_n$ and the zero matrix $0_{k,l}$.
In these cases, the subscripts
denote row and column dimensions.
\item Parentheses next to vectors or
matrices are used to denote specific elements: $x(i)$ denotes the
$i$-th element of vector $x$ and $X(i,j)$ denotes the element in
the $i$-th row and $j$-th column of matrix $X$.
\item Notation $X(:,j)$
denotes the $j$-th column of $X$ and $X(i,:)$ denotes the $i$-th
row (This is MATLAB notation and simliar to the notation used
in~\cite{golub}).
\item Parenthesis are also used to explicitly denote the components of
a row vector: $$x = \left(x(1), x(2), \ldots, x(N)\right),$$ where
$x$ is an $N$-dimensional vector.
\item Let $\RR^n$ denote the
$n$-dimensional real vector space and $\RR^{n\times m}$ denote
the $(n \cdot m)$-dimensional vector space used when specifying
matrix dimensions and let $\NN$ denote the natural numbers.
\item Let $\sym_n^{+}$ denote the space of symmetric positive definite $n$-by-$n$ matrices.
\item Let double-struck capital letters denote vector spaces,
e.g. $\mathbb{X}$.
\item Horizontally concatenating two matrices with
the same number of rows, $A$ and $B$, is denoted by $[A~B]$,
e.g. stacking two column vectors $x_1$ and $x_2$ vertically is
denoted by $[x_1^T x_2^T]^T$.
\item Superscripted indices in
parenthesis denote sub-blocks of vectors or matrices
corresponding to a vector encoding the block structure: $b =
(n_1, \ldots, n_m)$ where $n_i \in \NN$ for $i = 1,\ldots,m$. We
use $x^{(i)}$ to denote the $i$-th sub-column of the vector $x$,
which by using the block structure $b$ and one-based indexing
corresponds to $$x^{(i)} := \left(x\left(\sum_{j=1}^{i-1}b(j) +
1\right), \ldots,
x\left(\sum_{j=1}^{i}b(j)\right)\right).$$
\item Matrix block notation
$A^{(i,j)}$ represents $i$-th row block and $j$-th column block
with respect to the block structure $b$. Using a single index in
matrix block notation, $A^{(i)}$, denotes a block row (column
dimension of $A^{(i)}$ equals the column dimension of $A$).
\item Let $\mathrm{Square}(\cdot)$ denote componentwise squaring: if $y = \mathrm{Square}(x)$ then $y(i) = x(i)^2$ and let $\mathrm{diag}(X)$ denote the vector corresponding to the diagonal of the matrix $X$.
\end{itemize}


\end{document}